\documentclass[lettersize,journal]{IEEEtran}
\usepackage{amsmath,amsfonts}
\usepackage{algorithmic}
\usepackage{algorithm}
\usepackage{array}
\usepackage[caption=false]{subfig}
\usepackage{textcomp}
\usepackage{stfloats}
\usepackage{url}
\usepackage{verbatim}
\usepackage{graphicx}
\usepackage{cite}
\usepackage{hyperref}

\hyphenation{op-tical net-works semi-conduc-tor IEEE-Xplore}

\usepackage{cleveref}
\usepackage{amsthm}

\newtheorem{claim}{Claim}
\newtheorem{remark}{Remark}

\newtheorem{prerequisite}{Prerequisite}

\begin{document}

\title{Feasibility of Local Trajectory Planning for Level-2+ Semi-autonomous Driving without Absolute Localization}

\author{Sheng Zhu$^\ast$, Jiawei Wang$^\dagger$, Yu Yang$^\dagger$, Bilin Aksun-Guvenc$^\ast$, ~\IEEEmembership{Member,~IEEE}
\thanks{$\ast$ Sheng Zhu and Bilin Aksun-Guvenc are with the Automated Driving Lab, Ohio State University, OH 43212, USA. email: {zhu.1473, aksunguvenc1.}@osu.edu, Cooresponding author: zhu.1473@osu.edu}
\thanks{$\dagger$ Jiawei Wang and Yu Yang are with the State Key Laboratory of Automotive Simulation and Control, Jilin University, Changchun 130022, China. email: wangjw17@mails.jlu.edu.cn, yyu0821@163.com.}
\thanks{Manuscript received Sept XX, 2023; revised XX, 2023.}}

\markboth{Journal of \LaTeX\ Class Files,~Vol.~XX, No.~X, September~2023}%
{Shell \MakeLowercase{\textit{et al.}}: A Sample Article Using IEEEtran.cls for IEEE Journals}


\maketitle

\begin{abstract}

Autonomous driving has long grappled with the need for precise absolute localization, making full autonomy elusive and raising the capital entry barriers for startups. This study delves into the feasibility of local trajectory planning for level-2+ (L2+) semi-autonomous vehicles without the dependence on accurate absolute localization. Instead, we emphasize the estimation of the pose change between consecutive planning frames from motion sensors and integration of relative locations of traffic objects to the local planning problem under the ego car's local coordinate system, therefore eliminating the need for an absolute localization. Without the availability of absolute localization for correction, the measurement errors of speed and yaw rate greatly affect the estimation accuracy of the relative pose change between frames. We proved that the feasibility/stability of the continuous planning problem under such motion sensor errors can be guaranteed at certain defined conditions. This was achieved by formulating it as a Lyapunov-stability analysis problem. Moreover, a simulation pipeline was developed to further validate the proposed local planning method. Simulations were conducted at two traffic scenes with different error settings for speed and yaw rate measurements. The results substantiate the proposed framework's functionality even under relatively inferior sensor errors. We also experiment the stability limits of the planned results under abnormally larger motion sensor errors. The results provide a good match to the previous theoretical analysis. Our findings suggested that precise absolute localization may not be the sole path to achieving reliable trajectory planning, eliminating the necessity for high-accuracy dual-antenna GPS as well as the high-fidelity maps for SLAM localization.


\end{abstract}

\begin{IEEEkeywords}
Semi-autonomous Driving, Local Trajectory Planning, Relative Localization, journal.
\end{IEEEkeywords}

\section{Introduction}
\IEEEPARstart{O}{ver} the years, fully autonomous driving has struggled to achieve reliability and large-scale implementation. Even today, leading representatives in autonomous driving, such as Waymo and Cruise, face challenges navigating urban environments like San Francisco, with their driverless vehicles occasionally clogging traffic in the middle of the road \cite{wired2023, nytimes2023}. Despite more than a decade of industry R\&D, driverless autonomous solutions appear far from successful business applications based on current performance. Meanwhile, autonomous driving startups are rapidly consuming investments. Uber alone reportedly spent an annual \$457 million on self-driving R\&D before selling its unit to Aurora \cite{tcrunch2019}. Over time, investors have become less attracted to driverless technology and increasingly hesitant about the technology's potential business yields in the near future. The landmark shutdown of the star unicorn startup Argo AI \cite{tcrunch2022} and the closure of the first public self-driving truck company Embark Technology \cite{atnews2023} both reflect the struggles of Level-4 (L4) startups to secure capital funding. The heightened global economic uncertainty, along with potential recession, exacerbates this struggle. L4 providers like Waymo and TuSimple have reportedly laid off large numbers of employees to cut operating costs \cite{cnn2023,reuters2022}.

The valuation of autonomous driving startups has plummeted drastically in recent times. Waymo's valuation dropped 80\% from \$200 billion to \$30 billion in just 18 months. TuSimple's stock, which once peaked above \$60, now stands at slightly above \$1 as of April 2023. In contrast, Tesla's Autopilot driver assistance system, first deployed in 2014, has been a significant feature that greatly promotes the sale of Tesla vehicles. Its so-called Full Self-Driving (FSD) add-on package, though arguably named, directly contributed \$324 million in revenue for the fourth quarter alone in 2022 \cite{cnbc2023}. The FSD package is so popular that 19\% of Tesla owners opted in despite the price hike from an initial \$5,000 in 2019 to \$15,000 in 2022 \cite{iev2023}. The debate over the autonomous driving development strategy between Tesla and Waymo has persisted for years \cite{mday2019}. Tesla insisted on a progressive evolution path with a vision-based solution that emphasizes cost reduction and mass production application, while Waymo aimed for an all-in-one driverless solution from day one, incorporating advanced and expensive sensors like Lidar and relying heavily on detailed prerequisite information, such as high-definition (HD) maps. As of now, it seems that the Tesla approach prevails in the industry, generating continuous cash flow to back support its own progression.
\IEEEpubidadjcol

The autonomous vehicle industry has quietly but largely shifted interest towards cost-efficient Level-2+ (L2+) semi-autonomous driving solutions. OEMs are especially interested in the potential urban Navigate on Autopilot (NOA) feature \cite{rc2022}, an L2+ feature that traditional Tier-1 supplier unable to provide, to assist driver navigate in complex urban scenarios. L2+ solutions do not guarantee driving safety and demand human attention and intervention during driving. In such applications, "The driver is still responsible for, and ultimately in control of, the car," as Tesla stated \cite{nyu2022}. This easing of liability and the compromise on fully self-driving realization provide L4 startups a midway transition to package their solutions into a viable product. However, this transition is not merely a hardware downgrade and algorithm transplant. The lite version of hardware may imply fewer available resources, such as weaker computing power, unavailability of sensor data, or less accurate measurement data, among others. As a result, transplanting L4 algorithms, which are designed for data-abundant hardware platforms, to fit L2+ applications is not as simple as it may seem \cite{eo2022}.

One challenge to deal with is the lack of highly-accurate global/absolute localization. Centimeter-level accuracy of absolute localization is essential for L4 driving request detailed lane information from HD map. This high localization accuracy is usually achieved by the fusion of GNSS/INS (Inertial Navigation System), or simultaneous localization and mapping (SLAM) in areas with weak GPS signals by matching Lidar data and the normal distribution transform (NDT) map. However, in cost-sensitive mass production, L2+ vehicles do not come equipped with expensive dual-antenna GPS. NDT maps are also not available to cover every traffic route to work with SLAM. Therefore, for L2+ applications, centimeter-level accuracy of absolute localization is not available. Now the question arises: without this critical absolute localization information, is semi-autonomous driving still technically feasible? 

Consider the case of human driving: we don't necessarily need to be aware of our absolute localization by centimeter accuracy. One also does not have the detailed lane-to-lane transition routes at intersections, as HD maps provide. We are more aware of our surroundings, such as the distance from other traffic objects or whether we are in the correct lane. This analogy to human driving may seem simplistic but implies that L2+ semi-autonomous driving may be feasible without accurate absolute localization, but instead using relative localization. Relative localization refers to the process of determining the relative position and orientation of the ego vehicle with respective to the surrounding environment, including other vehicles, pedestrians, and obstacles. The relative localization is typically achieved from perception system using a combination of sensors such as cameras, radars, and possibly Llidars, which are available for L2+ ready vehicles. 

But how does local trajectory planning works with relative localization? During planning, one consideration is the trajectory consistency, meaning the trajectories planned in consecutive frames must maintain or approximate the spatial and temporal continuity. This is achievable for absolute localization since the ego vehicle localization, planned trajectories, and traffic object movements are all under the same global coordinate system. However, for relative localization, there's no way to accurately reflect all these information under the global coordinate system. Few research have been addressed this practical problem. To ensure the consistency of the planned trajectory, we emphasized that relative localization with respect to surrounding traffic between adjacent frames must be associated. The relative motion of the ego vehicle between adjacent frames can be estimated by integrating acceleration and yaw rate data from the inertial measurement unit (IMU) sensor. This integration inevitably introduces error in the estimation of position and posture changes. The INS is based on exactly this idea to estimate absolute localization but needs periodic correction/fusion with GNSS data to avoid the build-up of the integration errors. In the L2+ semi-autonomous driving case, there is no accurate source to correct the INS estimate of absolute localization. Therefore, instead of choosing global coordinates, trajectory planning for L2+ semi-autonomous driving was done under the local vehicle coordinate. The trajectory from last frame was projected to the local vehicle coordinate at current frame to ensure planning consistency. In this way, the localization error is limited between frames and will not accumulate. 

Although the trajectory planning topic is not new in research fields with different approaches raised based on spline\cite{zhu2018online,guvenc2021autonomous}, potential fields\cite{ji2023tripfield}, sampling method \cite{ma2015rrt, zhu2020trajectory}, graph search \cite{dolgov2010path}, optimization \cite{chen2017constrained}, and so on, few have evaluated the dynamic stability of the continuously changing planned trajectories. Trajectory planning is usually continuously ongoing in cycles and may subject to change in response to perturbations including changes in the environment, sensor noise, execution errors from control, and model uncertainties from vehicle dynamics and the environment. It's important to ensure the planned trajectory is stable and feasible under such perturbations. \cite{Tsukamoto2021} presented a learning-based motion planning with stability guaranteed by designing a differential Lyapunov function using contraction theory. In \cite{englot2009perception} a motion planning framework was designed to maximize a marine vehicle's stability margins against ocean disturbances. But few comparable analysis have been seen for trajectory planning in autonomous driving field. \cite{gu2015} demonstrated the stability of the cost-based lattice controller in event of dynamic environment change but limited to simulation without rigorous theoretical proof. Under the context of trajectory planning without absolute localization, the drift and offset errors from the IMU sensor could build up the estimated relative localization error and affects the continuously planning stability. Hence this paper specifically considers perturbations from relative localization and proves the necessary conditions under which the trajectory planning could maintain dynamic stability.

One major contribution from this paper is the proposal of local trajectory planning framework that works without the availability of absolute localization, which is challenging for L2+ semi-autonomous driving applications with limited hardware. Another contribution comes from the proof and conclusion that stability of the dynamic trajectory planning subject to motions sensor errors could be ensured under certain conditions easy to met. The stability of the proposed local trajectory planning framework is also further validated under different simulation scenarios given drifting and offset noise from the IMU sensor. The paper could bring some insights and prove the validity for L2+ semi-autonomous application development with limited hardware equipment. 

Organization of the paper is in follows: In Section II, the methodology of L2+ local trajectory planning without absolute localization was introduced. This is followed by proof and discussion on the effects of the relative localization error on the trajectory planning stability in Section \ref{sec:feasibility}. A simulation pipeline was built based on the proposed L2+ trajectory planning framework. The effects of drifting and offset noise from the IMU sensors on the continuous trajectory planning result were shown in simulation results and discussed in Section IV. Based on the theoretical analysis as well as simulation results under measurement errors of speed and yaw rate, we concluded that local trajectory planning for semi-autonomous driving without absolute localization is feasible. 

\section{Methodology}
\subsection{Trajectory Planning Framework}
Without loss of generality, assume that the vehicle state $x$ can be represented by projection of its position and heading in the XY plane for simplicity, $x \in \mathbb{R}^3$. 
The trajectory planning problem for the semi-autonomous driving vehicle is to determine its trajectory as a function of time to avoid collisions with obstacles or intrusions to untraversable area as by traffic rules. 

Let $T=[t_0, t_f]$ represent the time interval over which the trajectory of the semi-autonomous vehicle is to be planned, where $t_0$ and $t_f$ denote the planning initial time and terminal time respectively. The vehicle position is represented as $x \in \mathbb{R}^3$, with $x_0$ and $x_f$ indicating the position at time $t_0$ and $t_f$ separately. Let $O$ denote the set of road objects (obstacles, other road users, illegal traffic area) that are not traversable. $O(t) = \left \{ O_1(t) \cup O_2(t) \cup \dots \cup O_n(t) \right \} \subset \mathbb{R}^3$ is the space occupied by all road objects at time $t$. The vehicle trajectory can be interpreted as the continuous mapping $\mathcal{T}$ from $T$ to $\mathbb{R}^3$ that does not overlap with $O(t)$. Note that the map $\mathcal{T}$ has to be continuous to be physically realizable. The trajectory planning problem can be formally stated as \cite{kant1986toward}: 

\begin{list}{}{}
\item{ Find a mapping $\mathcal{T}: T \to \mathbb{R}^3$ with $x(t_0) = x_0$, $x(t_f) = x_f$, such that $\forall t \in T$, $x(t )|_{  \mathcal{T}}\notin O(t)$. }  
\end{list}

During driving, the complex and dynamic-changing traffic environment demands the planning to continuously update the trajectory. To ensure the continuity of the planned trajectories between frames, planning at current time step/frame usually sets the initial start point $x_0$ from last planning result. This would also have the benefit to decouple the planning process from the execution result of its downstream control module. The diagram in Figure \ref{fig:fig_plan} shows the implementation of the trajectory planning proposed for level-2+ semi-autonomous driving. 

\begin{figure*}[!t]
\centering
\includegraphics[width=\textwidth]{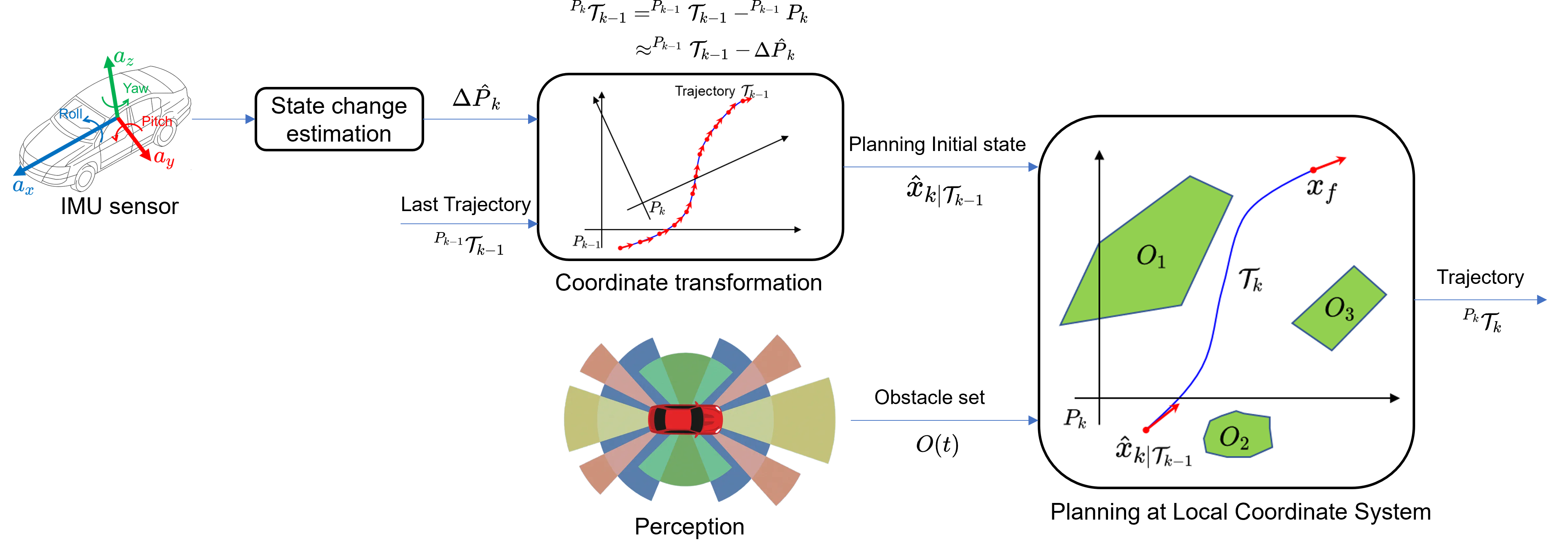}
\caption{Proposed local trajectory planning for level-2+ semi-autonomous driving without absolute localization}
\label{fig:fig_plan}
\end{figure*}

As shown in Figure \ref{fig:fig_plan}, at frame/timestep $t_k$ (note that frame and time step may be used interchangeably in this paper), the new trajectory $\mathcal{T}_k$ is planned under the local vehicle coordinate system $P_k$ from a planning initial state given the obstacle set $O(t)$ from the perception results. The perception result $O(t)$ itself is with respect to local vehicle coordinate system $P_k$ and does not need coordinate transformation. The planning initial state $x_k|\mathcal{T}_{k-1}$ represents the planned-ahead state for time $t_{k}$ by then last trajectory $/mathcal{T_{k-1}}$. Its projection to current local coordinate system $P_k$ however requires coordinate transformation: 

\begin{align}
\label{eq:coord}
  ^{P_k}\mathcal{T}_k &= ^{P_{k-1}}\mathcal{T}_{k-1} - ^{P_{k-1}}{P}_{k} \\
    &\approx  ^{P_{k-1}}\mathcal{T}_{k-1} - \Delta \hat P_k,
\end{align}
where $\Delta \hat P _k$ is the estimation for state change $^{P_{k-1}}{P}_{k}$. $\Delta \hat P _k$ could be derived from the onboard IMU sensor and/or wheel speed sensors: 

\begin{equation}
\label{eq:state_est1}
   \Delta \hat P_k = \begin{pmatrix} 
    \Delta \hat x \\
    \Delta \hat y \\
    \Delta \hat \theta
    \end{pmatrix} = \begin{bmatrix}
    \iint\limits_{t_{k-1}}^{t_k} \left ( a_x \cos{\Delta\hat\theta} - a_y \sin{\Delta\hat\theta}\right )  dt^2 \\
    \iint\limits_{t_{k-1}}^{t_k} \left ( a_x \sin{\Delta\hat\theta} + a_y \cos{\Delta\hat\theta}\right )  dt^2 \\
    \int\limits_{t_{k-1}}^{t_k}  \dot{\theta} dt
    \end{bmatrix}, 
\end{equation}
or more simply
\begin{equation}
\label{eq:state_est2}
   \Delta \hat P_k = \begin{pmatrix} 
\Delta \hat x \\
\Delta \hat y \\
\Delta \hat \theta
\end{pmatrix} = \begin{bmatrix}
\int\limits_{t_{k-1}}^{t_k} v\cos{\Delta \hat\theta}dt \\
\int\limits_{t_{k-1}}^{t_k} v\sin{\Delta \hat \theta}dt \\
\int\limits_{t_{k-1}}^{t_k}  \dot{\theta} dt
\end{bmatrix},
\end{equation}
where $v$ is the deduced speed from wheel speed sensors and vehicle lateral speed is ignored. 

Compared with level-4 planning, because of the unavailability of absolute localization information, the proposed planning for level-2+ semi-autonomous driving is done under the local vehicle coordinate system, which makes the relative state change estimation and coordinate transformation process necessary to associate planning result from last frame to current frame. 

\subsection{Validation Pipeline} \label{subsec_valid_pip}
In this work, a validation pipeline is developed to further validate the proposed local trajectory planning methodology without absolute localization in simulation. At each frame, the trajectory planning is done using the sampling-based method proposed in \cite{werling2012optimal}. In \cite{werling2012optimal}, a series of quintic polynomials are generated in lateral and longitudinal directions respectively under the Frenet coordinate system, and are then combined to form a pool of candidate trajectories. The "best" collision-free trajectory is then selected from this pool from a defined cost function which considers driving comfort and safety with respect to road objects. 

To simplify the validation pipeline, the control module is not included and hence the tracking errors are out of the discussion. But as we mentioned earlier, the proposed local planning method is decoupled from the downstream control module anyway so this simplification will not affect our final results. Instead, we assume that the trajectory is executed by the timestamp during trajectory execution. 

The estimation error of the state change is partly because of measurement errors from the sensors. We assume the following measurement models for the vehicle speed from wheel speed sensors, and yaw rate from the yaw rate sensor: 
\begin{equation}
\label{eq:speed_error}
    v_m = v_{\text{offset}} + \tilde{v}, \text{ where } \tilde{v}  \sim \mathcal{N}(v, \sigma_v^2). 
\end{equation}

\begin{equation}
\label{eq:yaw_rate_error}
    \dot{\theta}_m = \dot{\theta}_\text{offset} + \tilde{\dot{\theta}}, \text{ where } \tilde{\dot{\theta}}  \sim \mathcal{N}(\dot{\theta}, \sigma_{\dot{\theta}}^2). 
\end{equation}

\section{Feasibility/Stability Analysis}\label{sec:feasibility}

\subsection{Problem Description}

Equation \eqref{eq:state_est1} and \eqref{eq:state_est2} gives an approximation of relative motion change between consecutive planning frames. However, due to the integrals in the equations, the estimation of vehicle position and posture changes since the first frame could build up as time goes. The accumulated error could have adverse effects on the proposed local trajectory planning method, making it infeasible to achieve to the desired destination. 

 To simply the analysis, the terminal destination $x_f$ of the trajectory planning for consecutive frames is assumed unchanged in order to show the stability concept for continuous planning. This simplification is legit for consecutive frames in scenarios like traffic stop, lane changing, and etc. 
 
 The diagram in Figure \ref{fig:err} illustrates the potential effect of the accumulated estimation error to the continuous trajectory planning results at timestep $t_{k-1}$ and its next timestep $t_k$. To clearly show the drift of the trajectory due to the estimation error, the trajectories $\mathcal T_{k-1}$ and $\mathcal T_k$ planned under each own local coordinate system are put under the same global coordinate system, under which the terminal destination is set to $\bf 0$ without loss of generality. 

\begin{figure*}[!t]
\centering
\subfloat[Estimation error $\varepsilon=0$]{\includegraphics[height=3in,keepaspectratio]{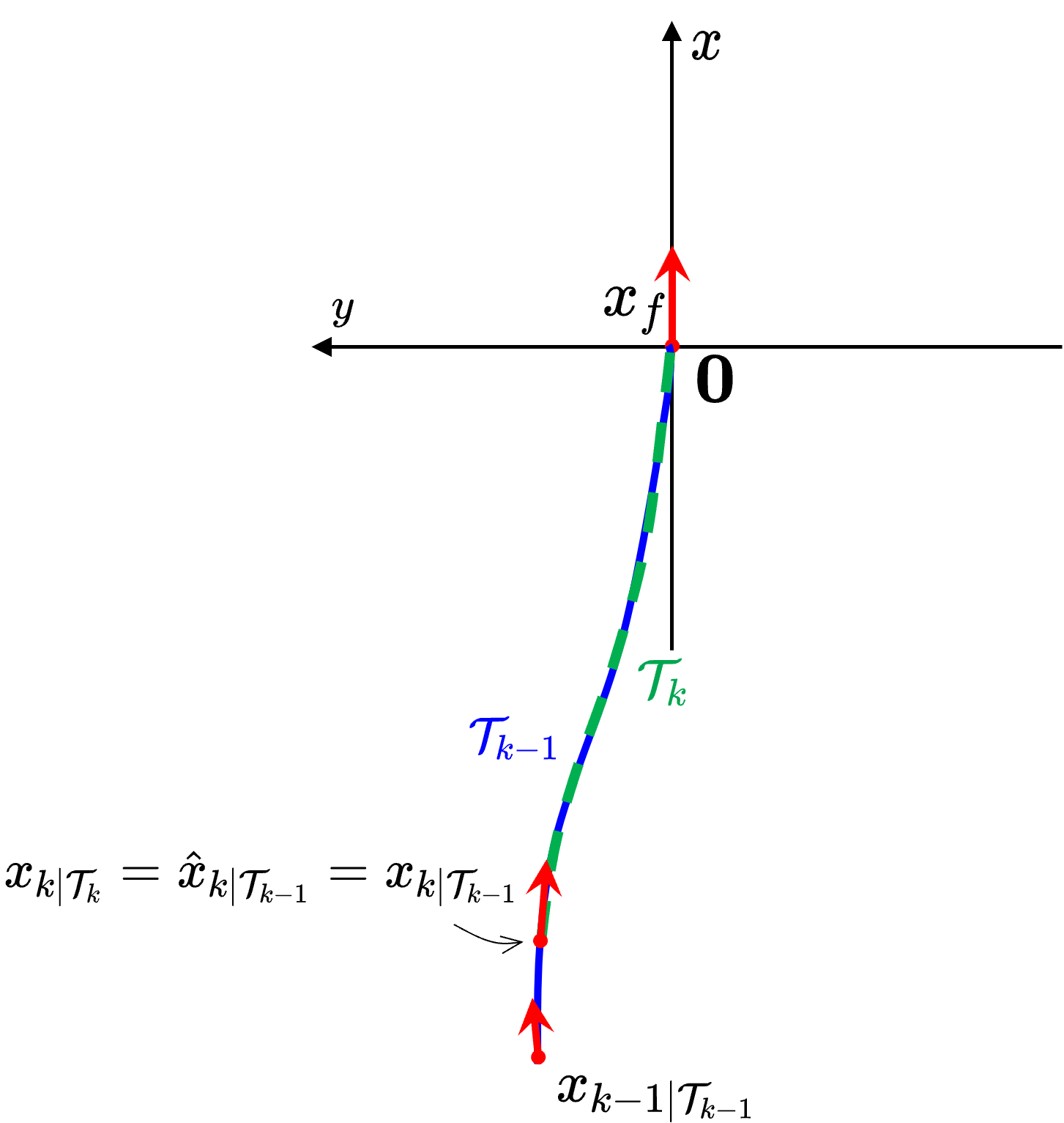}%
\label{fig:err_a}}
\hfil
\subfloat[Estimation error $\varepsilon \ne 0$]{\includegraphics[height=3in,keepaspectratio]{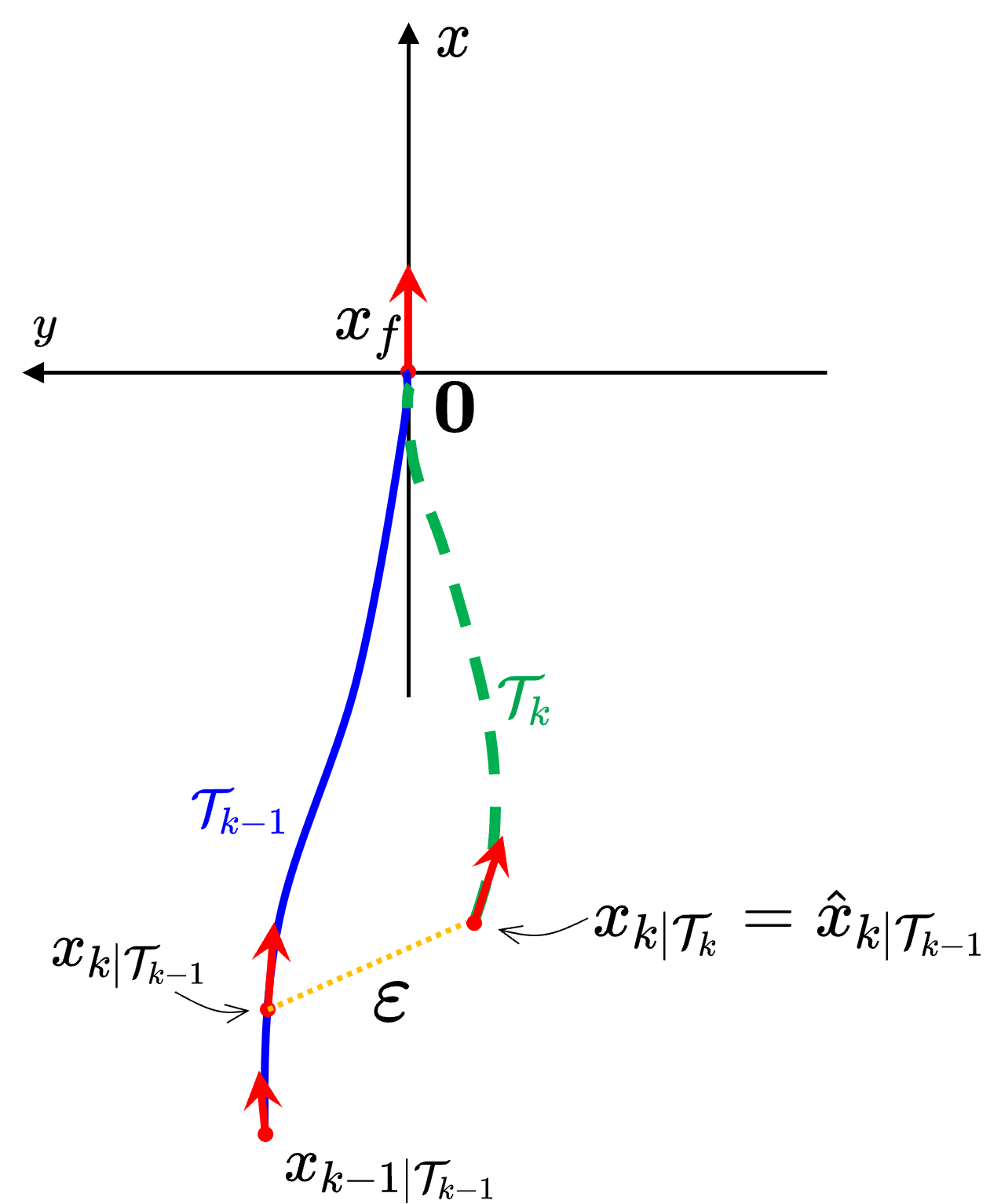}%
\label{fig:err_b}}
\caption{Illustration of effects of estimation error of state change $\varepsilon$ on continuous planning results at time step $t_{k-1}$ and $t_k$.}
\label{fig:err}
\end{figure*}

Figure \ref{fig:err_a} shows that without estimation error of the state change, trajectory $\mathcal T_{k}$'s start point $x_{k|\mathcal T_k}$ has the same state as the planned next-time-step state $x_{k|\mathcal T_{k-1}}$ by last trajectory $\mathcal T_{k-1}$, which is the prerequisite to realize the consistency of planning between frames. This can be proved as follows. From Equation \eqref{eq:coord}, we have: 

\begin{align}
^{P_k} x_{k|\mathcal T_{k-1}} = ^{P_{k-1}} x_{k|\mathcal T_{k-1}} - ^{P_{k-1}} P_k, \label{eq:xk_coord_1} \\
^{P_k} \hat x_{k|\mathcal T_{k-1}} = ^{P_{k-1}} x_{k|\mathcal T_{k-1}} - \Delta \hat P_k, \label{eq:xk_coord_2}
\end{align}
The estimation error of the state change between neighbor frames thus comes from:

\begin{equation}
\label{eq:epsilon_err}
\varepsilon = ^{P_{k-1}} P_k - \Delta \hat P_k. 
\end{equation}

In the case of Figure \ref{fig:err_a} when estimation error $\varepsilon=0$, $\Delta \hat P_k = ^{P_{k-1}} P_k$. In this case $\hat x_{k|\mathcal T_{k-1}}$ used as the initial planning start state $x_{k|\mathcal{T}_k}$ at time $t_k$ has the relationship: 

\begin{equation}
x_{k|\mathcal T_k} = \hat x_{k|\mathcal T_{k-1}} = x_{k|\mathcal T_{k-1}}. 
\end{equation}

In the case $\varepsilon \ne 0$ in \eqref{eq:epsilon_err}, minus Equation \eqref{eq:xk_coord_1} from \eqref{eq:xk_coord_2} we have: 

\begin{align}
^{P_k} \hat x_{k|\mathcal T_{k-1}} &=  ^{P_k} x_{k|\mathcal T_{k-1}} + (^{P_{k-1}} P_k -  \Delta \hat P_k), \\
&= ^{P_k} x_{k|\mathcal T_{k-1}} + \varepsilon, 
\end{align}
and therefore 
\begin{equation}
\hat x_{k|\mathcal T_{k-1}} = x_{k|\mathcal T_{k-1}} + \varepsilon,  
\label{eq:x_estimate}
\end{equation}
under the global coordinate system. 

Under this case in Figure \ref{fig:err_b}, the start state $x_{k|\mathcal T_k} $ for $\mathcal T_k$ deviates from the planned next-time-step state $x_{k|\mathcal T_{k-1}}$ at $\mathcal T_{k-1}$. The deviation between $x_{k|\mathcal T_k} $ and $x_{k|\mathcal T_{k-1}}$ is exactly the estimation error $\varepsilon$ of state change. 

Also note that both trajectories leads to the same terminal state $x_f$ despite existence of the error term $\varepsilon$. This is because that the terminal state is assumed fixed for stability analysis purpose as we mentioned before. Although different local vehicle coordinate system are used to represent the terminal state at each frame during planning, i.e. $^{P_{k-1}} x_f$ vs $^{P_k} x_f$, the coordinate transformation itself does not change any global object's state including $x_f$. Hence $x_f$ is not affected by the estimation error $\varepsilon$ of state change and every trajectory at each time step attempts to reach this destination $x_f$. 

It's likely that under the case Figure \ref{fig:err_b}, if the error term $\varepsilon$ is large enough, the continuously planned trajectory may never converge to the terminal state $x_f$ as $\varepsilon$ may drag the planning start point further and further away from $x_f$. Such inference is intuitive but lacks the support of theoretical derivation. We are interested in the questions: Is terminal state reachable during continuous planning given the estimation error $\varepsilon$ of state change? What is the bounds allowed for $\varepsilon$?

\subsection{Stability Analysis}

In the presence of estimation error $\varepsilon$, we have proved in last subsection that: 
\begin{align}
    x_{k|\mathcal T_k} &= \hat x_{k|\mathcal T_{k-1}} = x_{k|\mathcal T_{k-1}} + \varepsilon, \\ 
    &= x_{k-1 |\mathcal T_{k-1}} +\underbrace{ \Delta x_{k|\mathcal T_{k-1}} }_{\substack{\text{Planned state change}\\ \text{at trajectory }\mathcal T_{k-1}}}  + \underbrace{\varepsilon}_{\substack{\text{Estimation error of}\\ \text{the state change}}},   \label{eq:system_pre}
\end{align}
as illustrated in Figure \ref{fig:err_b}. 

Denote $x_{k|\mathcal T_{k}}$ by $x_k$ in the above equation, then we have the following discrete-time system described by:  
\begin{align}
    x_k = f(x_{k-1}) &= x_{k-1} + \underbrace{ \Delta x_{k|\mathcal T_{k-1}} + \varepsilon }  \\
                     &= x_{k-1} + \phantom{ \Delta x_{k|} }\hspace{0.5ex} \rho_{k-1},  \label{eq:system}
\end{align}
where $f:D\rightarrow \mathbb{R}$ is locally Lipschitz in $D\subset \mathbb{R}^3$, $D$ an open set containing the origin $\mathbf{0}  \in D$. 

The feasibility problem of the continuously trajectory planning then become the stability analysis for the discrete-time orbit, i.e. the sequence of state $x_k$ starting from an initial state $x_0$. 

Suppose  $f$ has an equilibrium at $x_f=\mathbf 0$, then the equilibrium $\mathbf{0}$ is said to be locally \textbf{Lyapunov stable} if: 

\begin{list}{}{}
\item{ For every $r>0$, there exists a $\delta > 0$ such that, if $\left \| x_0 -  \mathbf{0} \right \| < \delta $, then $\left \| x_k - \mathbf 0 \right \| <r$ for every $k>=0$. }  
\end{list}

Figure \ref{fig:ly_stable} shows an exemplary sequence of discrete state $x(\cdot )$ confined in the open ball of radius $r$, $B_r= \{x \in \mathbb{R}^3 \mid \left \| x \right \| < r \}$, projected in 2D plane. 

\begin{figure}
    \centering
    \includegraphics[height=0.6\linewidth]{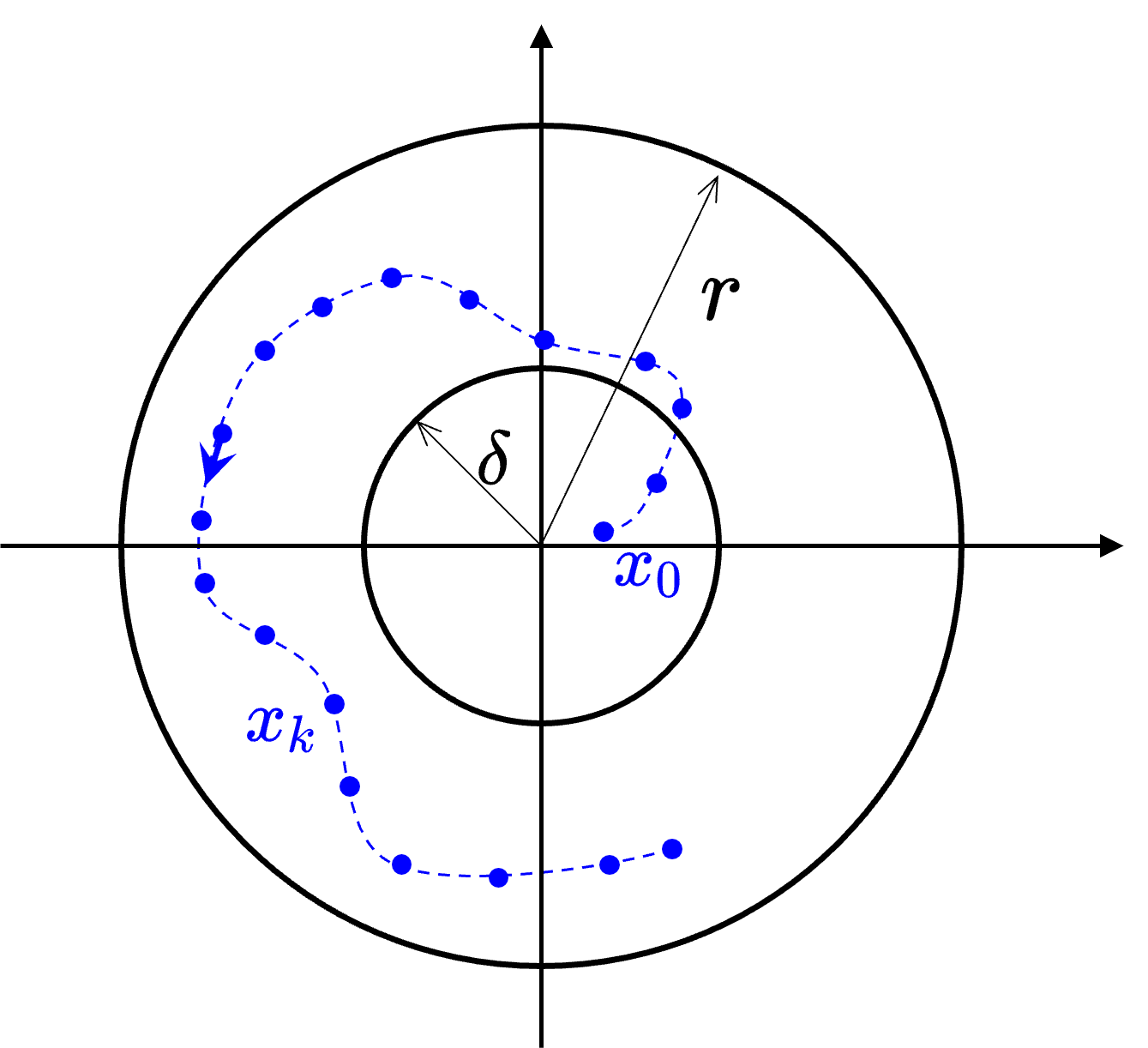}
    \caption{Stability in the sense of Lyapunov for discrete-time system projected in 2D plane.}
    \label{fig:ly_stable}
\end{figure}


Define a Lyapunov-alike function $V:D\rightarrow \mathbb{R}$ locally Lipschitz in $D$ with the form: 

\begin{equation}
    V(x) = x^Tx, x \in D,
\label{eq:func_V1}
\end{equation}
which satisfies the properties: 
\begin{equation}
    V(\mathbf 0) = 0, \text{ and } V(x) > 0, \forall x \in D-{\mathbf 0}. 
\label{eq:func_V2}
\end{equation}

Given the $(k-1)$-th state $x_{k-1}$, the value change of function  $V:D\rightarrow \mathbb{R}$  from $x_{k-1}$ to $x_k$ is: 
\begin{align}
    \Delta V(x_{k-1}) &= V(f(x_{k-1})) - V(x_{k-1}) \\
                      &= \left ( x_{k-1} + \rho_{k-1} \right ) ^T \left ( x_{k-1} + \rho_{k-1} \right ) - x_{k-1}^Tx_{k-1} \\
                      &= \left ( 2x_{k-1} + \rho_{k-1} \right ) ^T \rho_{k-1}.  \label{eq:delta_v_3}
\end{align}

We assume for $\Delta V(x_{k-1})$, the following prerequisite is satisfied: 

\begin{prerequisite}
    $\exists \eta >0$, such that $\forall x_{k-1} \in  \{x \in D \mid \left \| x \right \| > \eta  \}$, $\Delta V(x_{k-1}) \le 0 $ is always satisfied, given the Lipschitz-continuous function $V:D\rightarrow \mathbb{R}$ defined in Equation \eqref{eq:func_V1} and \eqref{eq:func_V2}. 
\label{prereq:delta_V}
\end{prerequisite}

\begin{remark}
    Prerequisite \ref{prereq:delta_V} is not stringent in the context of the continuously trajectory planning problem. We will show in the following that under certain conditions easy to met, the prerequisite assumed can be guaranteed. 
\end{remark}


To satisfy $\Delta V(x) \le 0$, from Equation \eqref{eq:delta_v_3} the inner product of $(2x+\rho)$ and $\rho$ have to be no greater than 0. Figure \ref{fig:inner_product} shows the physical meaning of this in the Euclidean plane. It shows different possibilities of $\rho$ and how it affects the $2x+\rho$ and correspondingly their inner product. 

\begin{figure}
    \centering
    \includegraphics[width=0.7\linewidth]{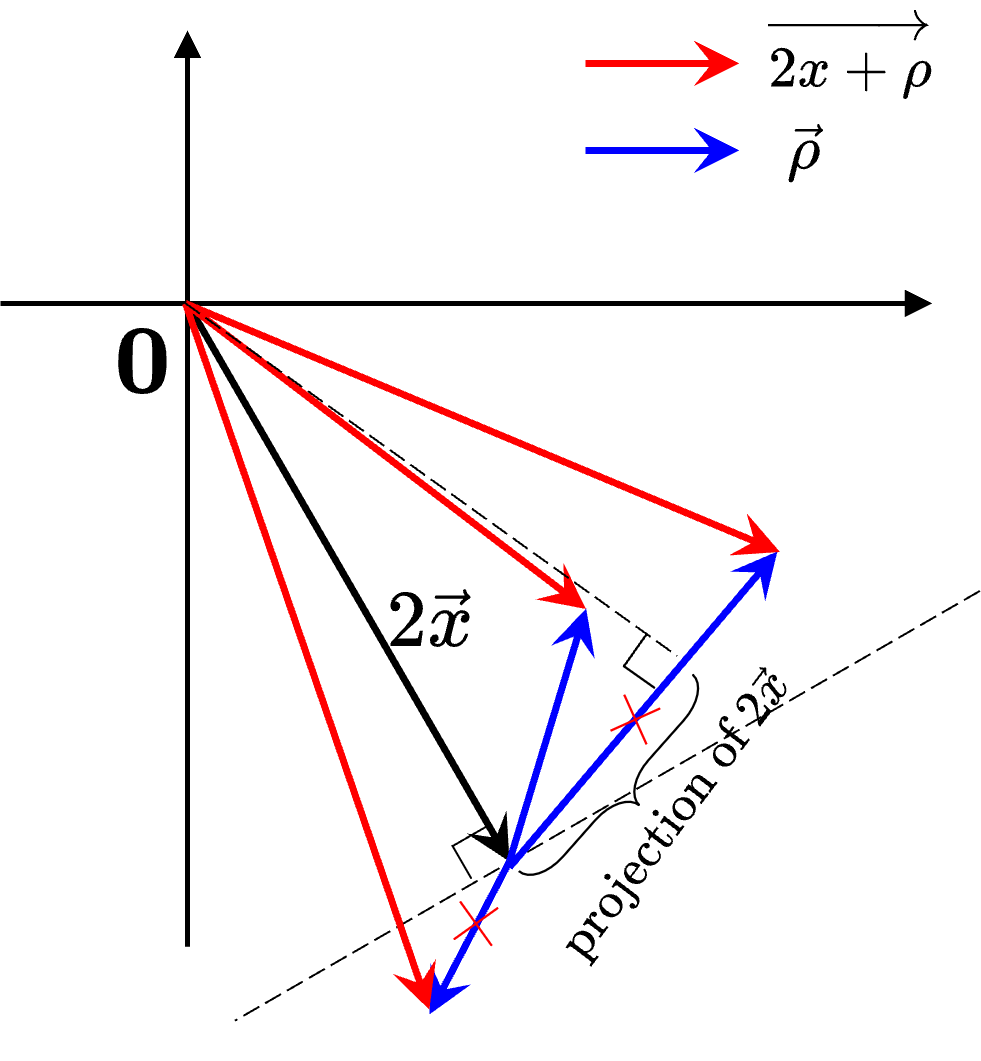}
    \caption{Inner product of $\rho$ and $2x+\rho$ at different combinations in Euclidean plane.}
    \label{fig:inner_product}
\end{figure}

Of the three examples given in the figure, either $\rho$ too lengthy or its wrong direction leads to $\overrightarrow{2x+\rho} \cdot \overrightarrow{\rho} > 0$. It is straightforward that two conditions have to be met to satisfy non-positive inner product in Euclidean plane: (1) $\overrightarrow{\rho} \cdot \overrightarrow x \le 0$; and (2) $\left | \overrightarrow{\rho} \right |$ should be less than the projection of $2\overrightarrow{x}$ on $\overrightarrow{\rho}$. For the vector space $\mathbb{R}^3$, similarly we concluded that the following conditions have to be satisfied to ensure $\Delta V(x) \le 0$: 

\begin{equation}
    x^T\rho \le 0,  \text{ and }  \rho^T \rho < -2x^T \rho. 
\label{eq:rho}
\end{equation}

We then show that condition \eqref{eq:rho} can be met in the following discussion. The term $\rho$ contains the estimation error of state change, $\varepsilon$, as seen from Equation \eqref{eq:system_pre} to \eqref{eq:system}. The error $\varepsilon$ is in probabilistic distribution due to noise and error from the motion sensors. Sensor noise and error is usually small and hence it is safe to assume a upper bound that:  
\begin{list}{}{}
\item{
    $\exists  \bar \varepsilon$, such that $\varepsilon$ is bounded $\left \| \varepsilon \right \| < \bar \varepsilon$. 
}
\end{list}

The other term contained in $\rho$ is $\Delta x_{k|\mathcal T_{k-1}}$, which is the planned next-step state change at $\mathcal T_{k-1}$. Figure \ref{fig:epsilon_bound} shows a possible vector of $\Delta x_{k|\mathcal T_{k-1}}$ given the $(k-1)$-th step's planning initial point $x_{k-1}$ under Euclidean plane as an example. Due to the nature of trajectory planning, the planned trajectory steps towards the terminal state $\mathbf 0$. Therefore, we can assume the following relation: 

\begin{equation}
    \Delta x_{k|\mathcal T_{k-1}} ^T x_{k-1} < 0.  
\label{eq:dir_assumption}
\end{equation}

\begin{figure}
    \centering
    \includegraphics[width=0.7\linewidth]{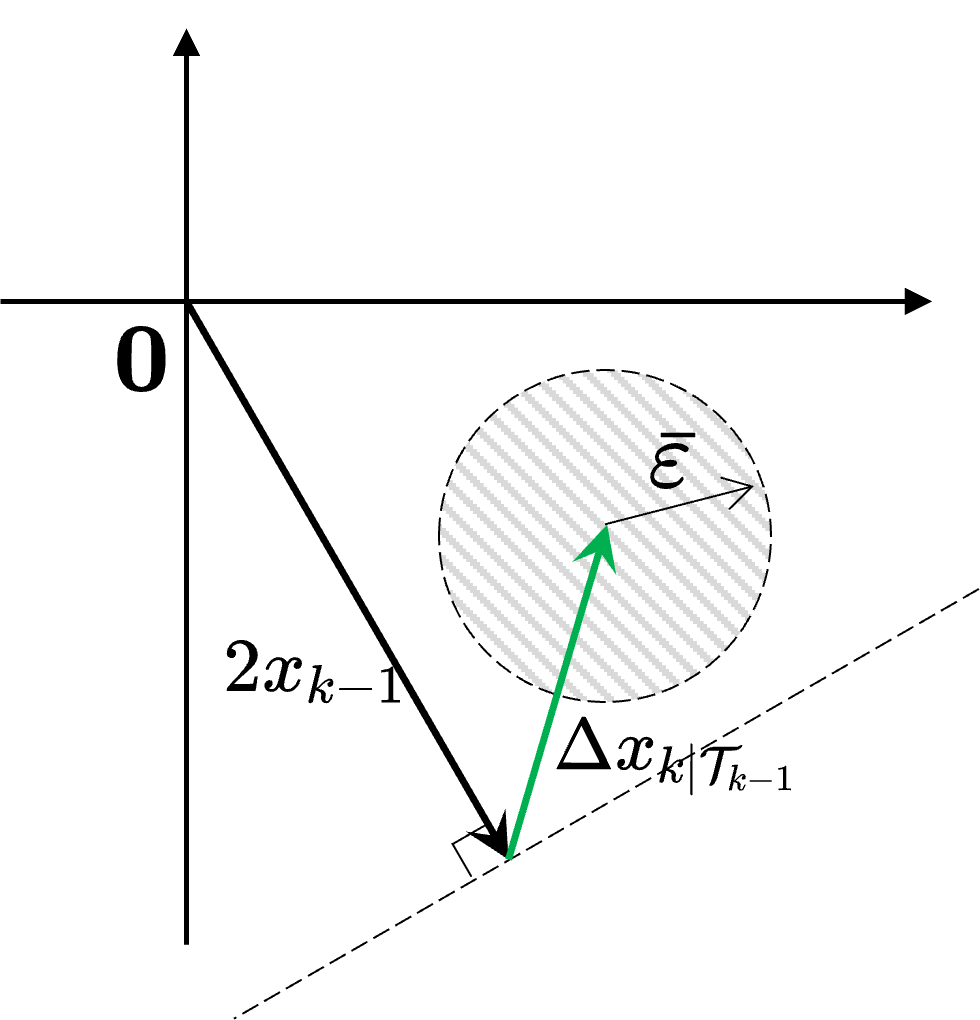}
    \caption{Example of $\Delta x_{k|\mathcal T_{k-1}}$ and bounds of $\varepsilon$ in Euclidean plane.}
    \label{fig:epsilon_bound}
\end{figure}

In Figure \ref{fig:epsilon_bound}, the open ball of radius $\bar \varepsilon$ is shown in the grey shaded area atop the tip of $\Delta x_{k|\mathcal T_{k-1}}$. Then the vector $\rho$, i.e. the sum of $\Delta x_{k|\mathcal T_{k-1}}$ and $\varepsilon$, is known to be confined in this shaded area. If for any $\varepsilon$ that bounded in $\left \| \varepsilon \right \| < \bar \varepsilon$, the condition \eqref{eq:rho} could be satisfied, then $\Delta V(x) \le 0$ can be guaranteed. 

This is very likely to be satisfied when the error bound $\bar \varepsilon$ is much smaller compared to the norm $ \left \| x_{k-1} \right \|$ or $\left \|  \Delta x_{k|\mathcal T_{k-1}}  \right \|$ if the relation \eqref{eq:dir_assumption} is met. Therefore, we can make the assumption that $\exists \eta > 0$, for any $x_{k-1} \in D$ that satisfies $\left \| x_{k-1} \right \| > \eta$, condition \eqref{eq:rho} is always met, and hence $\Delta V(x_{k-1}) \le 0$. The above discussion shows how the Prerequisite \ref{prereq:delta_V} is made.  

\begin{prerequisite}
    In the discrete system \eqref{eq:system}, $\rho$ is bounded within an open ball with radius $\bar \rho$, i.e. $\left \| \rho \right \| < \bar \rho$. 
\label{prereq:rho_bound}
\end{prerequisite}

\begin{proof}
    During planning, the vehicle's own physical motion capabilities would be considered to limit the planned state change $\Delta x_{k | \mathcal T_{k-1}} $, i.e. 
    \begin{equation}
        \left \| \Delta x_{k | \mathcal T_{k-1}} \right \| <  \Delta \bar x. 
    \end{equation}    
    And hence,  
    \begin{equation}
        \left \| \rho \right \| \le \left \| \Delta x_{k | \mathcal T_{k-1}} \right \|  + \left \| \varepsilon \right \| < \Delta \bar x + \bar \varepsilon    
    \end{equation}
    
    Therefore, there must exist a $\bar \rho$, such that $\left \| \rho \right \| < \bar \rho$ and $\bar \rho \le \Delta \bar x + \bar \varepsilon$. 
\end{proof}

\begin{remark}
    For the trajectory planning problem, the terminal state $x_f = \mathbf{0}$ may not be an equilibrium point since $f(\mathbf 0) = \mathbf 0$ is not guaranteed due to the term $\rho$ in Equation \eqref{eq:system}. However, from Prerequisite \ref{prereq:rho_bound}, we know that next state from $\mathbf 0$ is nearby, i.e. $\left \| f(\mathbf 0) \right \| \le \bar \rho$. 
\end{remark}

\begin{claim}
    For the discrete system described in Equation \eqref{eq:system}, with Prerequisite \ref{prereq:delta_V} and \ref{prereq:rho_bound}, the Lyapunov stability could not be achieved; but a weaker conclusion could be made: There exists a $\delta > 0$, if $\left \| x_0 -  \mathbf{0} \right \| < \delta $, then $ x_k $ is bounded in the sense that $\left \| x_k - \mathbf 0 \right \| < r$ for every $k \ge 0$ for some $r$. 
\label{clm:bounded}
\end{claim}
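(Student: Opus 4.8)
The plan is to establish that the closed ball $B_r$ of radius $r = \eta + \bar\rho$ is positively invariant for the discrete system \eqref{eq:system}, and then to obtain the claimed boundedness by induction. The argument splits the state space into the region where Prerequisite \ref{prereq:delta_V} forces the Lyapunov-alike function $V$ to be non-increasing, and the complementary region where that guarantee is unavailable but the one-step displacement $\rho$ is controlled by Prerequisite \ref{prereq:rho_bound}.

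First I would dispose of the negative part of the statement. Since $f(\mathbf 0) = \mathbf 0 + \rho$ and $\rho$ need not vanish, the origin is not an equilibrium; hence for any $\delta > 0$ a trajectory started arbitrarily close to $\mathbf 0$ is displaced by up to $\bar\rho$ on its very first step, so no $r$ smaller than this displacement can confine every trajectory. This rules out Lyapunov stability in its full \emph{``for every $r$''} form and motivates retreating to the weaker boundedness conclusion.

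For the positive part, I would set $r = \eta + \bar\rho$ and $\delta = r$, and argue by induction that $\|x_0\| < \delta$ implies $\|x_k\| < r$ for all $k \ge 0$. The inductive step is a two-case analysis on $x_{k-1}$. When $\|x_{k-1}\| > \eta$, Prerequisite \ref{prereq:delta_V} gives $\Delta V(x_{k-1}) \le 0$, and since $V(x) = x^Tx = \|x\|^2$ is monotone in the norm, this yields $\|x_k\| \le \|x_{k-1}\| < r$, so the norm cannot grow and the state cannot escape $B_r$ from this region. When $\|x_{k-1}\| \le \eta$, the non-increase guarantee is unavailable, but Prerequisite \ref{prereq:rho_bound} bounds the displacement, giving the triangle-inequality estimate $\|x_k\| \le \|x_{k-1}\| + \|\rho_{k-1}\| < \eta + \bar\rho = r$. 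Combining the two cases closes the induction and proves $\|x_k\| < r$ for all $k$.

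The main subtlety, and the step I would treat most carefully, is the interface between the two regions. Prerequisite \ref{prereq:delta_V} only controls behavior \emph{outside} the ball of radius $\eta$, so the only mechanism by which the orbit can increase its norm is a step originating inside $B_\eta$; the radius $r = \eta + \bar\rho$ is chosen precisely so that the largest such overshoot is still captured. I would also make explicit that the strict inequalities propagate correctly: strictness in the inner case comes from $\|\rho\| < \bar\rho$, while in the outer case it is inherited from the inductive hypothesis $\|x_{k-1}\| < r$ through the non-increase of the norm. Verifying that no scenario lets the norm cross $r$ is the crux of the argument; the remaining manipulations are routine applications of the triangle inequality and the identity $V = \|\cdot\|^2$.
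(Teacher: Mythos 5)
Your proof is correct, but it takes a genuinely different and more elementary route than the paper. The paper mimics the classical proof of Lyapunov's theorem for a general Lipschitz $V$: it picks $s=\max\{\eta,\bar\rho\}$, a radius $r>s+\bar\rho$, the level $\alpha=\min_{\|x\|=r}V(x)$, a sublevel set $\Omega_\beta=B_r\cap V^{-1}([0,\beta])$ with $\beta\in(0,\alpha)$, isolates the connected component $C_\beta\supset B_s$, and proves forward invariance $f(C_\beta)\subset C_\beta$ via a connectedness-plus-contradiction argument (a boundary crossing would force $V(f(x))\ge\alpha>\beta\ge V(x)$, contradicting Prerequisite~\ref{prereq:delta_V}); $\delta$ is then chosen so the $\delta$-ball sits inside $C_\beta$. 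You instead exploit the explicit form $V(x)=x^Tx$, under which $\Delta V\le 0$ is literally non-increase of the norm, and run a direct two-case induction showing $B_{\eta+\bar\rho}$ is positively invariant: outside $B_\eta$ the norm cannot grow, and a step from inside $B_\eta$ overshoots by at most $\bar\rho$. This buys you a shorter proof, sharper constants ($r=\eta+\bar\rho$ versus the paper's $r>\max\{\eta,\bar\rho\}+\bar\rho$, and $\delta=r$ rather than a small ball inside $C_\beta$), and it dovetails with Claim~\ref{clm:further_bounded}, whose limit set is exactly your invariant ball --- indeed your inner-case estimate is the same one the paper uses at the end of that later proof. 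What the paper's heavier machinery buys is generality: the sublevel-set/connected-component argument survives if $V$ is replaced by any positive-definite Lipschitz function, where your norm-monotonicity shortcut is unavailable; here, with $V$ the squared norm, its component discussion is actually vacuous since $V^{-1}([0,\beta])$ is itself a ball. Two small points to tighten: your induction implicitly needs $B_{\eta+\bar\rho}\subset D$ so that Prerequisite~\ref{prereq:delta_V} applies throughout the annulus (the paper makes the analogous assumption when choosing $B_r\subset D$); and your negative part tacitly assumes $\rho$ is bounded away from zero near the origin, which the prerequisites do not guarantee --- though the paper is equally informal there, disposing of it only in a remark that $f(\mathbf 0)=\mathbf 0$ is not guaranteed.
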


\begin{proof}
Choose $s = max \left \{ \eta, \bar \rho \right \}$ where $\eta$ and $\bar \rho$ are declared in Prerequisite \ref{prereq:delta_V} and \ref{prereq:rho_bound}, such that the open ball $B_s = \left \{  x \in \mathbb{R}^3 \mid \left \| x \right \| < s \right \} \subset D$. Then choose $r > s + \bar \rho> 0$ that $B_r = \left \{  x \in \mathbb{R}^3 \mid \left \| x \right \| < r \right \} \subset D$. Let $\alpha = \min_{\left \| x \right \| = r} V(x)$, then we know $\alpha > 0$ due to \eqref{eq:func_V2}. Take $\beta \in (0, \alpha)$, the set $\Omega_\beta = B_r \cap V^{-1}( [0,\beta]) \subset B_r$ could have several connected components as shaded in Figure \ref{fig:fig_set}. 

Consider $C_\beta \subset \Omega_\beta$ is the connected component that contains $B_s$, i.e. $B_s \subset C_\beta$. Since the function has the form as in \eqref{eq:func_V2}, this can be ensured with a chosen $\beta \ge (s + \bar \rho)^2$. In the following, we will prove that $f^n(C_\beta) \subset C_\beta$ for every $n \ge 0$. 

First we show that the next discrete state from $\mathbf 0$ still belongs to $C_\beta$, i.e. $f(\mathbf 0) \in C_\beta$. From Prerequisite \ref{prereq:rho_bound}, we have $\left \| f(\mathbf 0) \right \| < \bar \rho \le s$. Therefore, $f(\mathbf 0) \in B_s \subset C_\beta$. 

Next we prove that $f(C_\beta) \subset B_r \cap V^{-1}([0,\beta])$. This has to be discussed for $B_s$ and $C_\beta \backslash B_s$ separately. For $x \in B_s$, 
\begin{equation}
    \left \| f(x) \right \| \le \left \| x \right \| + \left \| \rho \right \| < s + \bar \rho < r,
\end{equation}
and hence, 
\begin{equation}
    f^T(x)f(x) < (s + \bar \rho)^2 \le \beta. 
\end{equation}
Therefore, $f(B_s) \subset B_r \cap V^{-1}([0,\beta])$ . 

Then since $f:D\rightarrow \mathbb{R}^3$ is Lipschitz in $D$, $f(C_\beta)$ is also connected and $f(B_s) \subset f(C_\beta)$. This means that at least part of $f(C_\beta \backslash B_s)$ is a subset of $B_r$ too. We can further conclude that $f(C_\beta \backslash B_s) \subset B_r$. If this is not true, then $f(C_\beta \backslash B_s)$ overlaps with $B_r$. There's a point $x \in C_\beta \backslash B_s$ such that $\left \| f(x) \right \| = r$. Then the Lyapunov-alike function 
\begin{equation}
    V(f(x)) \ge \alpha > \beta \ge V(x).     
\end{equation}
This is contradictory with the non-increasing characteristics of $V(x)$ in Prerequisite \ref{prereq:delta_V}. 

Thus $f(C_\beta)$ is connected and a subset in $B_r \cap V^{-1}([0,\beta])$. Meanwhile $f(\mathbf 0) \in f(C_\beta)$ and  $f(\mathbf 0) \in C_\beta$. This implies that  $f(C_\beta) \subset C_\beta$. We can then conclude that $f^n(C_\beta) \subset C_\beta$ for any $n \in \mathbb N$. Hence we can choose a $\delta > 0$ that satisfies $\left \{ x \in D \mid \left \| x \right \| < \delta \right \} \subset C_\beta$, if $\left \| x \right \| < \delta$, then $\left \| f^n(x) \right \| < r$. 

\end{proof}

\begin{figure}
\centering
\includegraphics[height=0.6\linewidth]{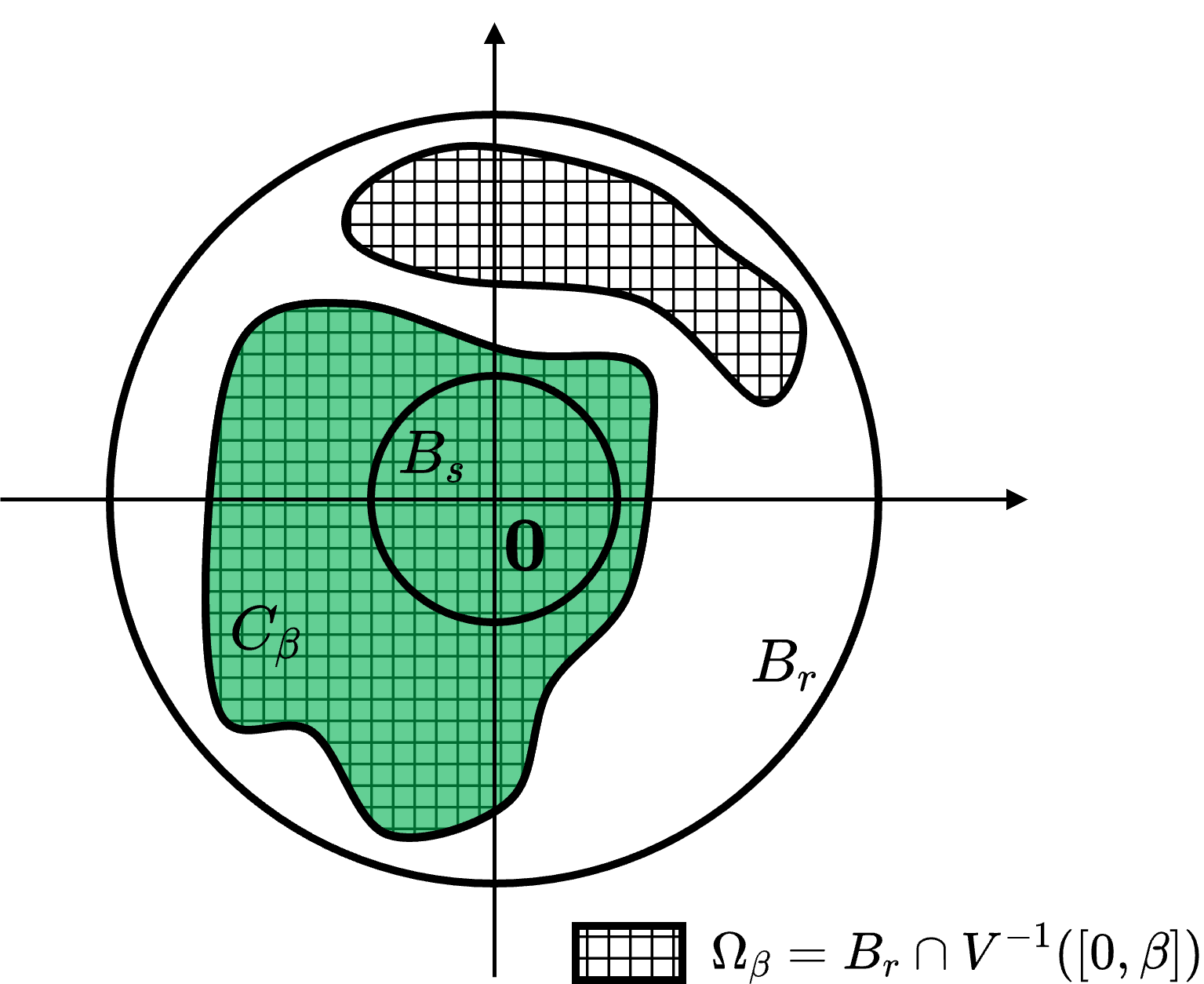}
\caption{Illustration of the sets projection to 2D plane}
\label{fig:fig_set}
\end{figure}


\begin{claim}
    If $\Delta V(x)$ is strictly decreasing in Prerequisite \ref{prereq:delta_V}, for the system \eqref{eq:system} along with Prerequisite \ref{prereq:rho_bound}, there exists $\delta > 0$, such that if $\left \| x_0 -  \mathbf{0} \right \| < \delta $, then $\lim_{n \to \infty}{ \left \| x_n -  \mathbf{0} \right \| } < \eta + \bar \rho$. This means that the state $x_k$ will be contained in $B_{\eta+\bar\rho} = \{x \in D \mid \left \| x \right \| < \eta + \bar \rho \}$ at some point, and its orbit will remain inside thereafter. 
\label{clm:further_bounded}
\end{claim}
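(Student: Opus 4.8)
The plan is to prove the statement in two stages, both leaning on the boundedness already established in Claim \ref{clm:bounded} and on the strengthened, strict form of Prerequisite \ref{prereq:delta_V} (namely $\Delta V(x) < 0$ whenever $\|x\| > \eta$). First I would show that the ball $B_{\eta+\bar\rho}$ is forward invariant under the map $f$ in \eqref{eq:system}; second I would show that, started from $\|x_0\| < \delta$, the orbit must enter $B_{\eta+\bar\rho}$ after finitely many steps. Together these give $\|x_n\| < \eta + \bar\rho$ for all $n$ beyond some finite index, which is exactly the asserted containment (the $\lim$ statement being read as eventual containment of the orbit).

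For the invariance step, suppose $\|x_k\| < \eta + \bar\rho$ and split into two cases. If $\|x_k\| \le \eta$, the triangle inequality together with Prerequisite \ref{prereq:rho_bound} gives $\|x_{k+1}\| \le \|x_k\| + \|\rho_k\| < \eta + \bar\rho$. If instead $\eta < \|x_k\| < \eta + \bar\rho$, the strict decrease of $V$ yields $V(x_{k+1}) < V(x_k)$, i.e. $\|x_{k+1}\|^2 < \|x_k\|^2 < (\eta+\bar\rho)^2$, so again $\|x_{k+1}\| < \eta + \bar\rho$. Hence $f(B_{\eta+\bar\rho}) \subset B_{\eta+\bar\rho}$.

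For the attractivity step I would argue by contradiction: assume $\|x_k\| \ge \eta + \bar\rho$ for every $k$. By Claim \ref{clm:bounded} the whole orbit lies in $B_r$ with $r > s + \bar\rho \ge \eta + \bar\rho$, so the orbit is confined to the compact annulus $A = \{x : \eta + \bar\rho \le \|x\| \le r\}$. Since $f$ is locally Lipschitz and $V$ in \eqref{eq:func_V1} is continuous, $\Delta V = V\circ f - V$ is continuous on $A$; being strictly negative there by hypothesis, it attains a maximum $-c$ with $c > 0$. Telescoping \eqref{eq:delta_v_3} then gives $V(x_k) \le V(x_0) - kc \to -\infty$, contradicting $V \ge 0$ from \eqref{eq:func_V2}. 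Therefore the orbit enters $B_{\eta+\bar\rho}$ at some finite time $N$, and by the invariance step it remains there for all $n \ge N$, establishing $\|x_n\| < \eta + \bar\rho$ eventually.

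The main obstacle is the attractivity step, specifically upgrading the pointwise strict inequality $\Delta V(x) < 0$ to a uniform decrease $\Delta V(x) \le -c < 0$: pointwise strictness alone does not preclude $\Delta V \to 0$ along the orbit, which would let $V$ decrease forever without the orbit ever crossing into $B_{\eta+\bar\rho}$. The resolution hinges entirely on compactness — the orbit is trapped in $B_r$ by Claim \ref{clm:bounded}, so it lives on the compact annulus $A$, where the continuous $\Delta V$ realizes its negative supremum. A secondary point requiring care is the boundary regime $\eta < \|x_k\| < \eta + \bar\rho$ in the invariance step, where the \emph{strict} decrease (rather than the non-strict version used in Claim \ref{clm:bounded}) is precisely what prevents the orbit from drifting outward.
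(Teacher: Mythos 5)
Your proposal is correct, and its engine is the same as the paper's: trap the orbit in a compact set via Claim \ref{clm:bounded}, invoke Weierstrass to upgrade the pointwise strict inequality $\Delta V < 0$ to a uniform bound $\Delta V \le -\mu < 0$, and telescope $V$ to force $V \to -\infty$, contradicting positivity. The decomposition differs in a useful way, though. The paper runs the compactness argument on $C_\beta \backslash B_\eta$ to show the orbit enters the \emph{smaller} ball $B_\eta$, then adds the one-step bound $\left \| f(x) \right \| < \left \| x \right \| + \bar\rho$ for $x \in B_\eta$ to land in $B_{\eta+\bar\rho}$; it leaves implicit why the orbit, once it re-exits $B_\eta$ into the annulus $B_{\eta+\bar\rho} \backslash B_\eta$, cannot escape $B_{\eta+\bar\rho}$ before returning. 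Your forward-invariance step handles exactly this regime explicitly: for $\eta < \left \| x_k \right \| < \eta + \bar\rho$ the strict decrease of $V = \left \| \cdot \right \|^2$ forces $\left \| x_{k+1} \right \| < \left \| x_k \right \|$, so $f(B_{\eta+\bar\rho}) \subset B_{\eta+\bar\rho}$, and eventual containment follows cleanly from entry plus invariance. In that sense your proof is slightly more complete than the paper's. One small technical point: your annulus $A = \{x : \eta + \bar\rho \le \left \| x \right \| \le r\}$ may touch the sphere $\left \| x \right \| = r$, where membership in $D$ (hence negativity of $\Delta V$) is not directly guaranteed; since Claim \ref{clm:bounded} actually confines the orbit to the compact set $C_\beta$, you should take the compact set to be $C_\beta \cap \{x : \left \| x \right \| \ge \eta + \bar\rho\}$ (this is essentially what the paper does with $C_\beta \backslash B_\eta$), after which your argument goes through verbatim.
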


\begin{proof}
    From the proof part for Claim \ref{clm:bounded}, choose $\delta$ that satisfies $\left \{ x \in D \mid \left \| x \right \| < \delta \right \} \subset C_\beta$, if  $\left \| x \right \| < \delta$, then $ f^n(x) \in C_\beta$ for every $n \ge 0$. 
    
    First we show that for $x \in C_\beta \backslash B_\eta$, at some point k, $f^k(x) \in B_\eta$, where $B_\eta = \{ x \in D \mid \left \| x \right \| < \eta \}$. For the sake of contradiction, suppose that this is not the case, then for all $k \ge 0$ we have 
    \begin{equation}
        f^k(x) \in C_\beta \backslash B_\eta. 
    \end{equation}
    
    Since $ C_\beta \backslash B_\eta $ is compact, and $\Delta V$ is continuous and $\Delta V(x) < 0$ for $ x \in C_\beta \backslash B_\eta $, then from Weierstrass theorem, we know $\Delta V$ attains a negative maximum $-\mu$, i.e.
    \begin{equation}
        \Delta V (x) \le -\mu < 0, \text{ if } x \in C_\beta \backslash B_\eta, 
    \end{equation}
    and hence, 
    \begin{align}
        V(f^n(x)) &= V(f^{n-1}(x)) + \Delta V(f^{n-1}(x))   \\
                &= V(f^{n-2}(x)) + \Delta V(f^{n-2}(x)) + \Delta V(f^{n-1}(x)) \\
                & \dots   \\
                &= V(x)  + \sum_{k=1}^{n-1} \Delta V\left ( {f^k(x)}  \right ) \\
                &\le V(x) - (n-1)\mu. 
    \end{align}
    
    Letting $n\rightarrow +\infty $, the right-hand side tends to $ -\infty $. This is in contradiction with $V(x) > 0$ for $x \in D - \mathbf 0$. Therefore, for $x \in C_\beta \backslash B_\eta$, there must exist some $k$ such that,
    \begin{equation}
        f^k(x) \in B_\eta. 
    \label{eq:tmp}
    \end{equation}

    For $x \in B_\eta$, 
    \begin{align}
        \left \| f(x) \right \| &\le \left \| x \right \| + \left \| \rho \right \|  \\
                                &<  \left \| x \right \| + \bar \rho. 
    \end{align}
    This shows that $f(x) \in B_{\eta + \bar\rho}$ if $x \in B_\eta$. Together with the summary made in \eqref{eq:tmp}, the Claim \ref{clm:further_bounded} can be proved. 
    
\end{proof}

\begin{remark}
   Claim \ref{clm:further_bounded} further extends the boundedness conclusion made in Claim \ref{clm:bounded}, and shows that for the trajectory planning problem, as $k \rightarrow +\infty$, the planned state will eventually contained in $B_{\eta+\bar\rho}$, despite the state change estimate error. The $B_{\eta+\bar\rho}$ is a relatively small area and should satisfy the need for the trajectory planning problem. 
\end{remark}

\section{Results and Discussions}

Two scenarios are designed to experiment the local planning method without global positioning information. Simulations of two scenes were conducted with the validation pipeline developed in subsection \ref{subsec_valid_pip} \footnote{Code for this work is available at \url{https://github.com/codezs09/l2_frenet_planner.git}}. 

To experiment the effects of sensor drift and noise to the planning results, the offset and standard deviation of speed and yaw rate sensor readings in \eqref{eq:speed_error} and \eqref{eq:yaw_rate_error} are set to:
\begin{align}
v_\text{offset} &= -0.1 \text{ m/s}, \quad \sigma_v = 0.1 \text{ m/s}. \label{eq:v_err_val} \\
\dot{\theta}_\text{offset} &= 0.57 \text{ deg/s}, \quad \sigma_{\dot{\theta}} = 1.72 \text{ deg/s}.  \label{eq:theta_err_val}
\end{align}

These errors are set to large values intentionally to validate the feasibility of the proposed local planning methodology. Normally the speed and yaw rate sensor would achieve more accurate readings with respect to the ground truth. 

\subsection{Moving Traffic Scene}
In this scene, the ego car is set to run in a double-lane road with other moving vehicles. Figure \ref{fig:traffic_scene} shows the the continuous local planning results under the measurement error settings \eqref{eq:v_err_val} and \eqref{eq:theta_err_val}. The moving vehicles are shown in blue bounding boxes, which are enlarged to ensure safe distance with the ego car. The dashed-line boxes represent the predicted motions for these vehicles. The ego vehicle is represented in a green box in this figure. The candidate trajectory planned for each lane (left lane in blue, right lane in orange) with a planning horizon of 5.0 seconds is also displayed extended from the tail of the ego vehicle. The selected trajectory is highlighted in red points with each point representing an increment of 0.1 second per time step. This helps visualize the speed change by observing the density and spread of the selected trajectory. 


\begin{figure}
    \centering
    \subfloat{%
        \includegraphics[trim=1.2cm 0.8cm 1cm 0.5cm, clip, width=0.5\linewidth]{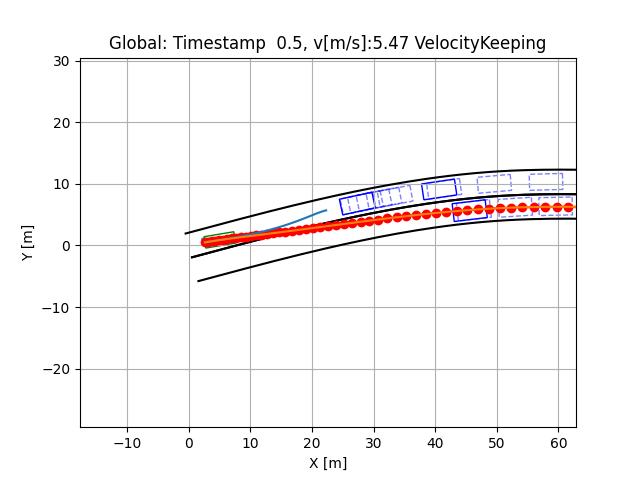}
    }
    \subfloat{%
        \includegraphics[trim=1.2cm 0.8cm 1cm 0.5cm, clip, width=0.5\linewidth]{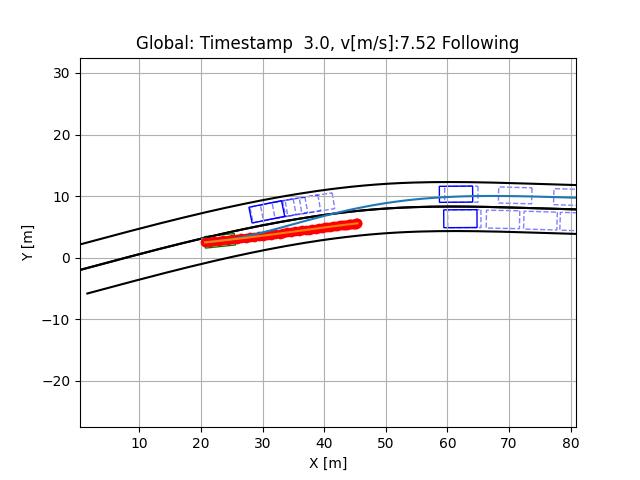}
    }
    \\
    \vspace{-0.3cm} 
    \subfloat{%
        \includegraphics[trim=1.2cm 0.8cm 1cm 0.5cm, clip, width=0.5\linewidth]{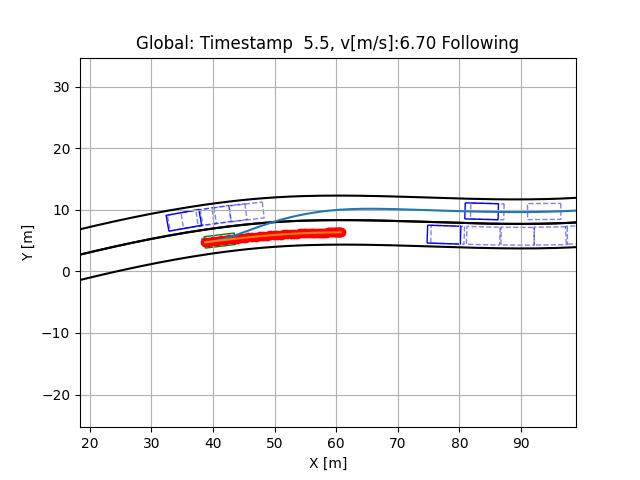}
    }
    \subfloat{%
        \includegraphics[trim=1.2cm 0.8cm 1cm 0.5cm, clip, width=0.5\linewidth]{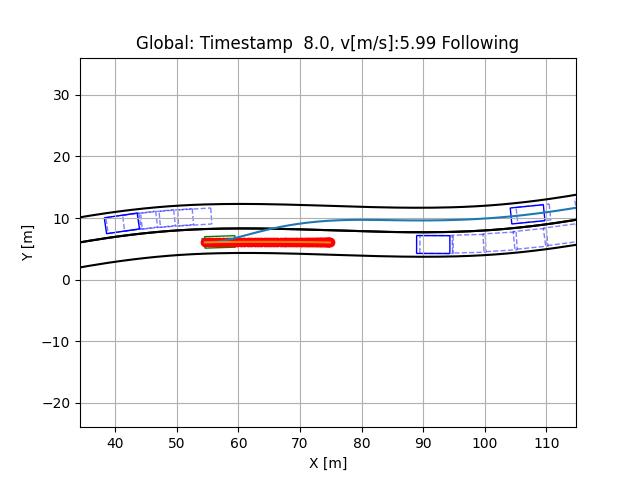}
    }
    \\
    \vspace{-0.3cm} 
    \subfloat{%
        \includegraphics[trim=1.2cm 0.8cm 1cm 0.5cm, clip, width=0.5\linewidth]{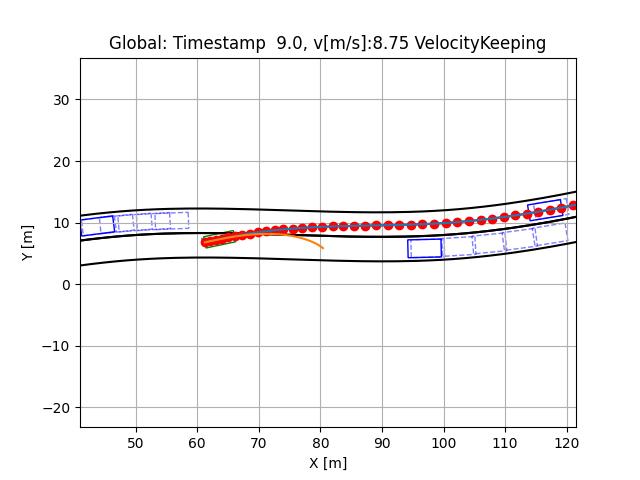}
    }
    \subfloat{%
        \includegraphics[trim=1.2cm 0.8cm 1cm 0.5cm, clip, width=0.5\linewidth]{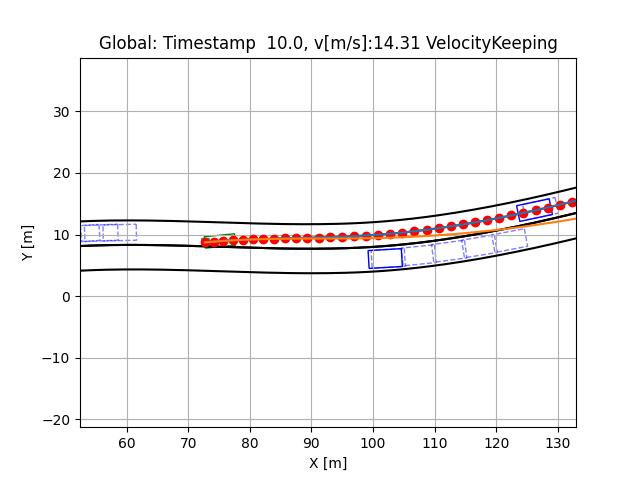}
    }
    \\
    \vspace{-0.3cm}
    \subfloat{%
        \includegraphics[trim=1.2cm 0.8cm 1cm 0.5cm, clip, width=0.5\linewidth]{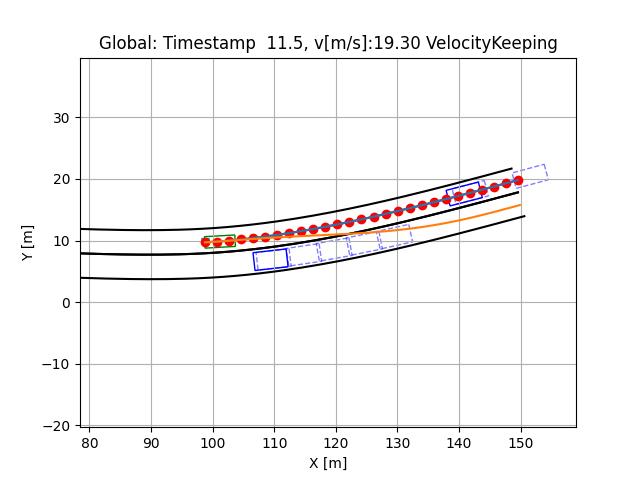}
    }
    \subfloat{%
        \includegraphics[trim=1.2cm 0.8cm 1cm 0.5cm, clip, width=0.5\linewidth]{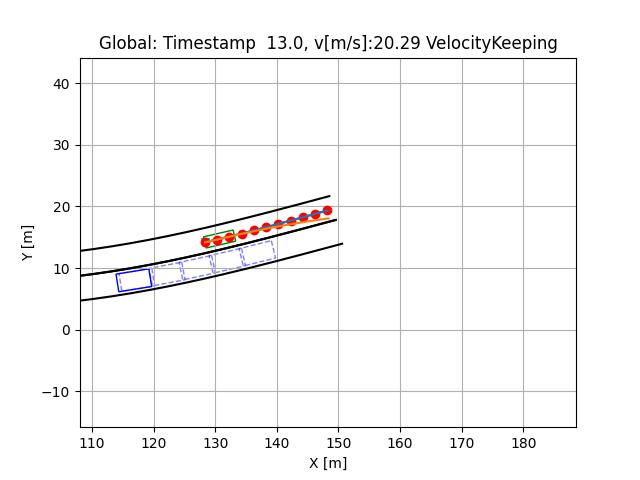}
    }
    \caption[Caption for LOF]{Local planning in a moving traffic scene\protect\footnotemark.}
    \label{fig:traffic_scene} 
\end{figure}
\footnotetext{The animation gif is available at \url{https://bit.ly/3L4QSHB}.}

In Figure \ref{fig:traffic_scene}, the ego car performs a right lane change first, and then follows the front car until the gap on the left lane is safe enough for it to make another lane change back to the left lane. The safety distance in both longitudinal and lateral directions of the ego car are well maintained. The ego car centers well within the lane bounds when not performing lane changing. This demonstrates that despite the unavailability of global localization information and a not-so-accurate sensor readings of speed and yaw rate to estimate the relative motion, the continuous planning remains highly feasible under the proposed methodology in Figure \ref{fig:fig_plan}.

Figure \ref{fig:traffic_sensor_reading} shows the change of speed and yaw rate of the ego car during the moving traffic scene, as well as the measurement readings in blue-cross line. Time plots of the measurement errors are also displayed in the right subplots to show the deviation and noise of the sensor readings. 

\begin{figure}
\centering
\includegraphics[trim=2.0cm 0cm 3.0cm 1cm, clip, width=\linewidth]{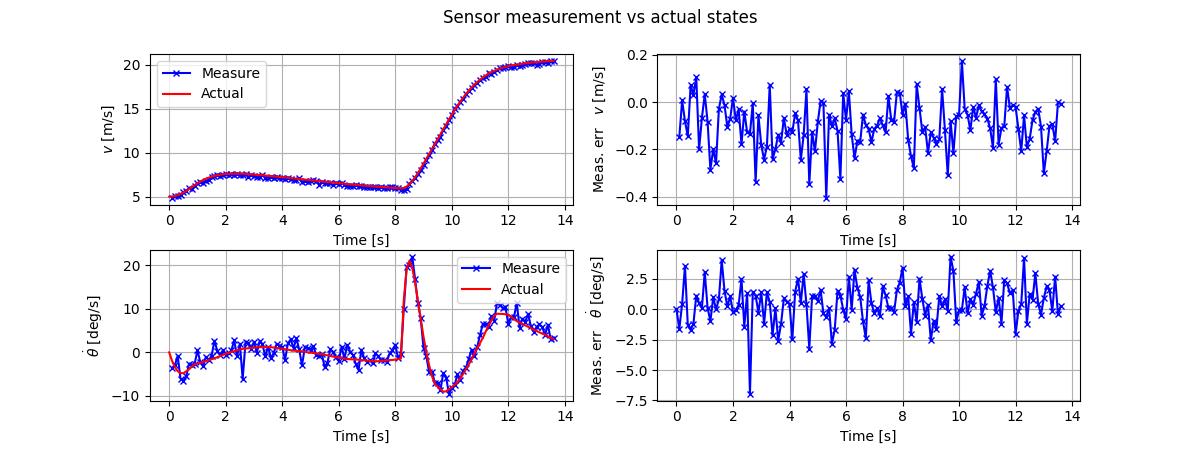}
\caption{Time plots of vehicle speed, yaw rate and measurement errors.}
\label{fig:traffic_sensor_reading}
\end{figure}

We also examined how the measurement errors from speed and yaw rate affect the relative motion estimate between planning frames. In Figure \ref{fig:traffic_err_box}, the top two plots shows the box plot as well as the scattering of the measurement errors. The bottom plots show how the above measurement errors resulted in the estimated errors $\varepsilon$ in Equation \eqref{eq:epsilon_err} for the pose change between planning frames, i.e. $\Delta x$, $\Delta y$, and $\Delta\theta$, under the local coordinate system.

\begin{figure}
\centering
\includegraphics[trim=0.0cm 0cm 1.5cm 1cm, clip, width=\linewidth]{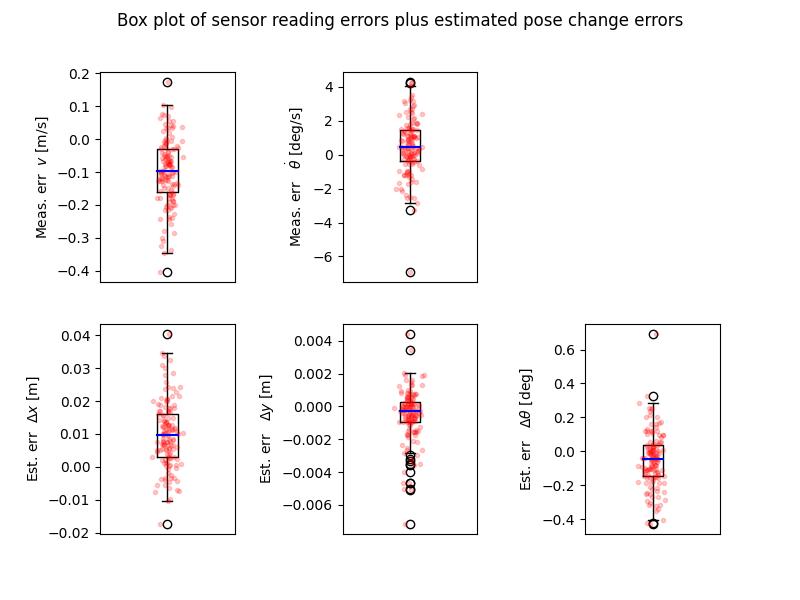}
\caption{Box plots of measurement errors (upper plots) and the resulted estimation errors of relative motion between planning frames (bottom plots)}
\label{fig:traffic_err_box}
\end{figure}

\subsection{Stop Scene}

Another scene where the vehicle has to stop to wait in traffic was ran with the same measurement errors settings for the speed and yaw rate. We are interested to see how the vehicle will perform under low speed or at static. In this scene, a fixed stop distance was set to observe whether the ego car could reach and maintain the target pose under the impact of measurement errors. 

\begin{figure}
    \centering
    \subfloat{%
        \includegraphics[trim=1.2cm 0.8cm 1cm 0.5cm, clip, width=0.5\linewidth]{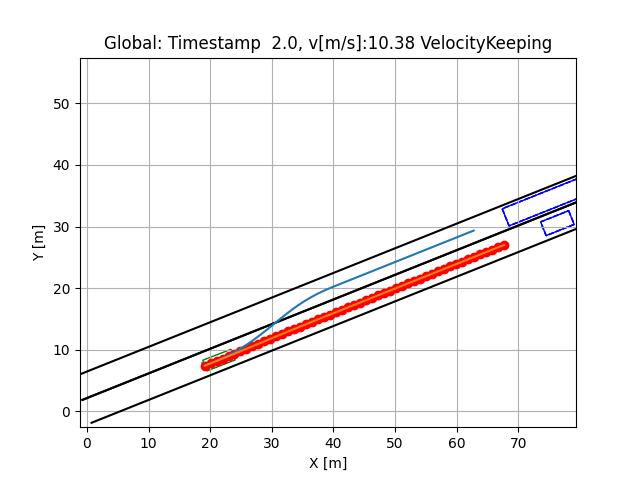}
    }
    \subfloat{%
        \includegraphics[trim=1.2cm 0.8cm 1cm 0.5cm, clip, width=0.5\linewidth]{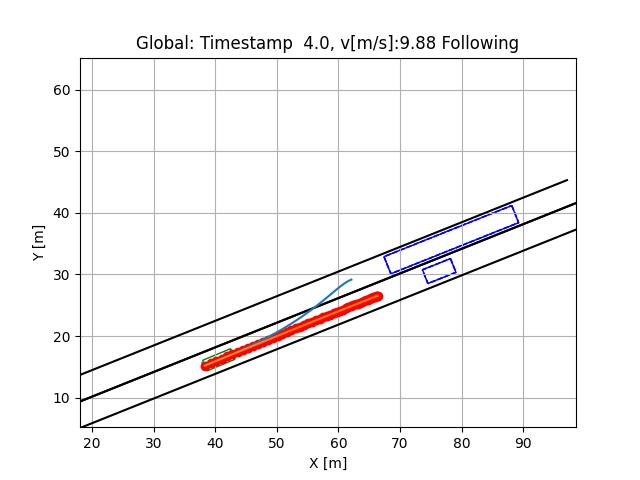}
    }
    \\
    \vspace{-0.3cm} 
    \subfloat{%
        \includegraphics[trim=1.2cm 0.8cm 1cm 0.5cm, clip, width=0.5\linewidth]{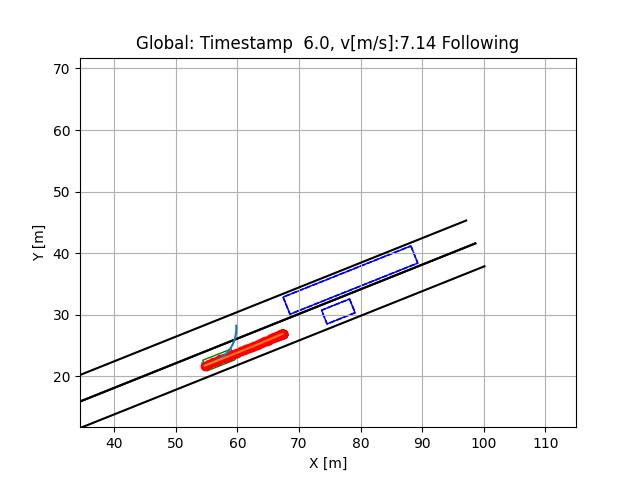}
    }
    \subfloat{%
        \includegraphics[trim=1.2cm 0.8cm 1cm 0.5cm, clip, width=0.5\linewidth]{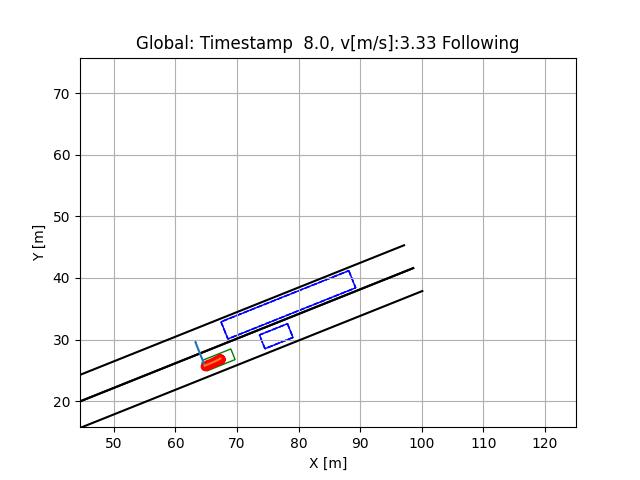}
    }
    \\
    \vspace{-0.3cm} 
    \subfloat{%
        \includegraphics[trim=1.2cm 0.8cm 1cm 0.5cm, clip, width=0.5\linewidth]{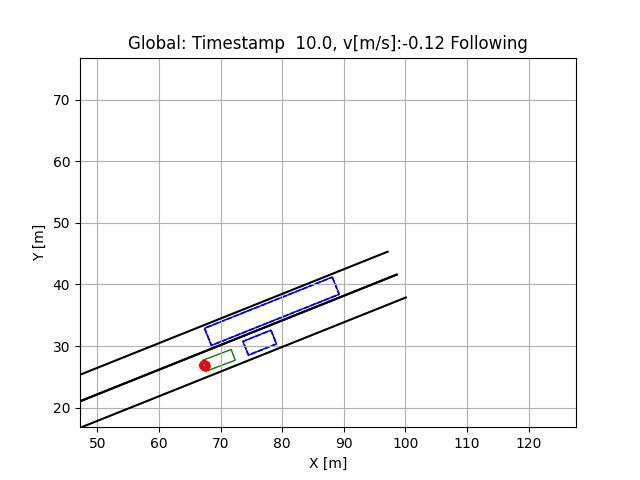}
    }
    \subfloat{%
        \includegraphics[trim=1.2cm 0.8cm 1cm 0.5cm, clip, width=0.5\linewidth]{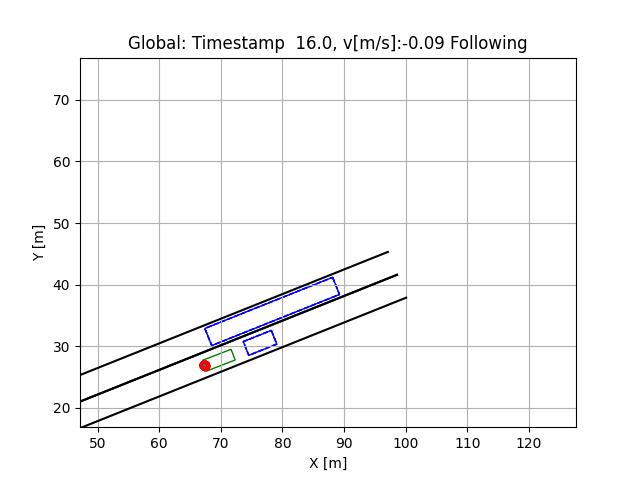}
    }
    \caption[]{Local planning in a stop scene\protect\footnotemark.}
    \label{fig:stop_scene} 
\end{figure}


Figure \ref{fig:stop_scene} shows that traffic in both lanes are blocked by a long semi-truck and a sedan in front. The ego car makes a deceleration to a full stop. \footnotetext{The animation gif is available at \url{https://bit.ly/47XMcxg}.} The bottom two plots demonstrate that the ego car is capable to maintain the stop pose despite the estimation error of relative motion change between frames caused by the sensor errors. Interesting readers can refer to the animation\protect\footnotemark to see how the planning compensates the estimation error and get a sense of how the condition \eqref{eq:rho} is met under this error setting. 

\footnotetext{The animation gif is available at \url{https://bit.ly/3sFb1hk}.}

Note that the speed planned is negative in both bottom plots in Figure \ref{fig:stop_scene} when the vehicle is stopped. This is due to that negative speed error offset $v_\text{offset} = -0.1 \text{ m/s}$ leads to a positive estimation error offset of $\Delta x$ along the longitudinal direction, as can be seen in Figure \ref{fig:traffic_err_box}. From Equation \eqref{eq:x_estimate}, the planning start point for next frame will be ahead due to the positive estimation error of the pose change, and thus making the planning to move backward under the local coordinate system. 

\subsection{Stability Limits at Sensor Offset Errors}

As discussed in Section \ref{sec:feasibility}, Prerequisite \ref{prereq:delta_V} and Prerequisite \ref{prereq:rho_bound} have to be both satisfied to ensure the stability of the local planning problem as stated in Claim \ref{clm:further_bounded}. We experimented the larger sensor offset errors to check the stability limits for the tested two scenes. 

Figure \ref{fig:scene_spd_err} shows the screenshots for the two scenes at a much larger speed measurement offset error, $v_\text{offset} = -1.0 \text{ m/s}$. Figure \ref{fig:scene_spd_err}a shows at timestamp 11.5 second, the ego car reaches to a further position compared to Figure \ref{fig:traffic_scene} and starts a third lane-change to surpass the vehicle in left lane. Figure \ref{fig:scene_spd_err}b on the other hand shows that the ego car keeps creeping forward until it crashes to the front car. This is due to the reason that the estimate error of pose change between frames is too large for the planning motion to compensate, which then gradually accumulated to larger deviations from the target pose, essentially drifting away from the target pose. Claim \ref{clm:further_bounded} would not be valid as the Prerequisite \ref{prereq:rho_bound} is not satisfied under such cases. 


\begin{figure}
    \centering
    \subfloat[Traffic scene\protect\footnotemark]{%
        \includegraphics[trim=1.2cm 0.8cm 1cm 0.5cm, clip, width=0.5\linewidth]{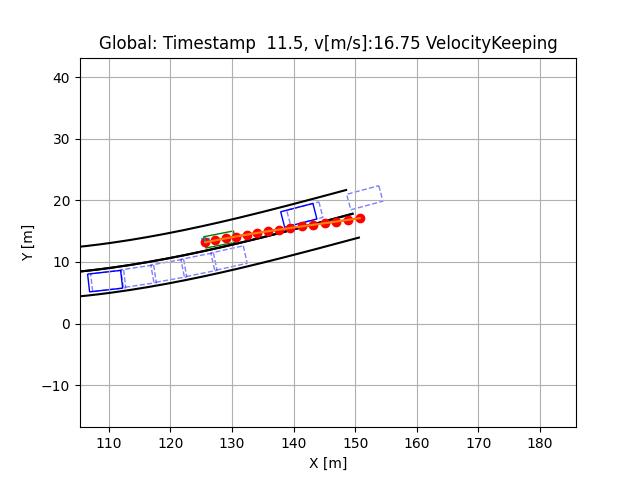}
    }
    \subfloat[Stop scene\protect\footnotemark]{%
        \includegraphics[trim=1.2cm 0.8cm 1cm 0.5cm, clip, width=0.5\linewidth]{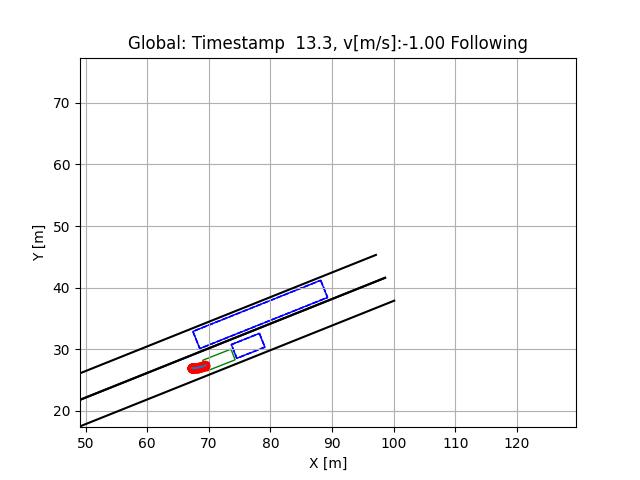}
    }
    \caption[]{Effects of speed offset error: $v_\text{offset} = -1.0 \text{ m/s}$.}
    \label{fig:scene_spd_err} 
\end{figure}
\addtocounter{footnote}{-1} 
\footnotetext{The animation gif is available at \url{https://bit.ly/3EqyTb4}.}
\addtocounter{footnote}{1} 
\footnotetext{The animation gif is available at \url{https://bit.ly/3sAavkt}.}

Figure \ref{fig:scene_yaw_1} and Figure \ref{fig:scene_yaw_2} are screenshots of the two scenes under yaw rate offset $\dot{\theta}_\text{offset}$ of 2.29 deg/s and 5.73 deg/s, respectively. At $\dot{\theta}_\text{offset}=2.29 \text{deg/s}$ in Figure \ref{fig:scene_yaw_1}, the ego car is not capable to center itself in the middle of the lane and almost rides on the right lane bound compared to Figure \ref{fig:stop_scene}. Considering that 2.29 deg/s yaw rate offset is already abnormally large, it shows that the proposed local planning framework is very robust in maintaining the stability of continous planning. Under an even larger error $\dot{\theta}_\text{offset} = 5.73  \text{deg/s}$ in Figure \ref{fig:scene_yaw_2}, the ego car drifts right so much that it eventually crosses over the lane bound and leads to potential crash or fails in both scenarios.

\begin{figure}
    \centering
    \subfloat[Traffic scene\protect\footnotemark]{%
        \includegraphics[trim=1.2cm 0.8cm 1cm 0.5cm, clip, width=0.5\linewidth]{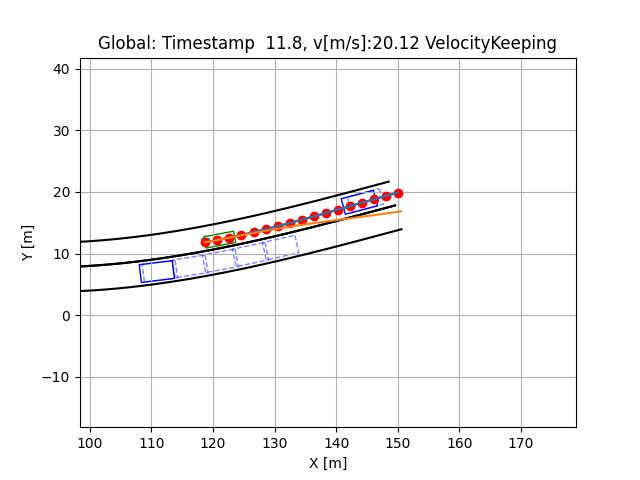}
    }
    \subfloat[Stop scene\protect\footnotemark]{%
        \includegraphics[trim=1.2cm 0.8cm 1cm 0.5cm, clip, width=0.5\linewidth]{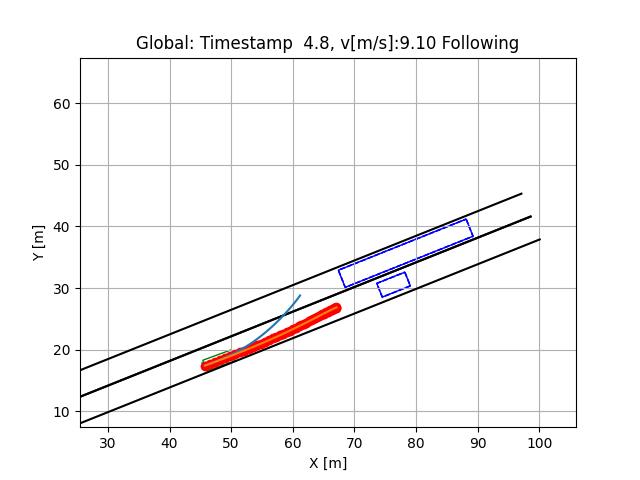}
    }
    \caption[]{Effects of yaw rate offset error: $\dot{\theta}_\text{offset} = 2.29 \text{ deg/s}$.}
    \label{fig:scene_yaw_1} 
\end{figure}
\addtocounter{footnote}{-1} 
\footnotetext{The animation gif is available at \url{https://bit.ly/3qRgSj7}.}
\addtocounter{footnote}{1} 
\footnotetext{The animation gif is available at \url{https://bit.ly/3P4hMAT}.}

\begin{figure}
    \centering
    \subfloat[Traffic scene\protect\footnotemark]{%
        \includegraphics[trim=1.2cm 0.8cm 1cm 0.5cm, clip, width=0.5\linewidth]{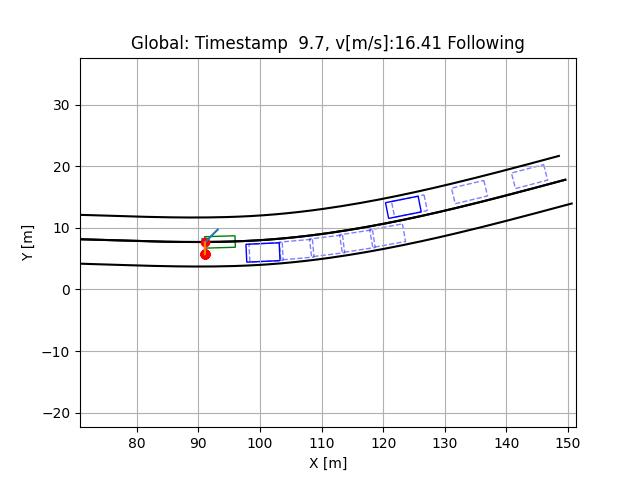}
    }
    \subfloat[Stop scene\protect\footnotemark]{%
        \includegraphics[trim=1.2cm 0.8cm 1cm 0.5cm, clip, width=0.5\linewidth]{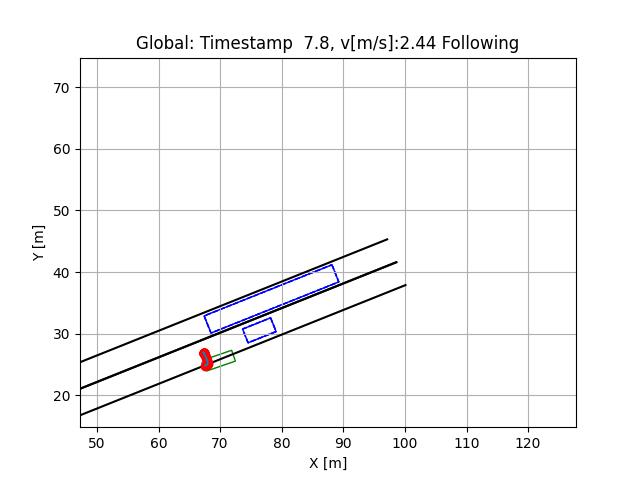}
    }
    \caption[]{Effects of yaw rate offset error: $\dot{\theta}_\text{offset} = 5.73 \text{ deg/s}$.}
    \label{fig:scene_yaw_2} 
\end{figure}


\section{Conclusions}
Overcoming the challenge of level-2+ semi-autonomous driving without relying on accurate absolute localization has been the focal point of this study. This paper has explored the viability of local trajectory planning without the need for absolute localization, focusing on a local vehicle coordinate system. The proposed local trajectory planning methodology for level-2+ semi-autonomous driving relying on the pose change estimation between planning frames from motion sensors, as well as the relative locations of traffic objects and lane lines with respect to the ego vehicle under local vehicle coordinate system. The proposed planning methodology under motion sensor errors was described as a Lyapunov stability problem with its feasibility proven under certain conditions in Section \ref{sec:feasibility}. And finally a validation pipeline was built with two scenes chosen to experiment the proposed planning framework under different error settings of speed and yaw rate measurements. The simulation results strongly support the feasibility analysis and demonstrate that the continuous planning can function properly even under relatively inferior sensor errors in \eqref{eq:v_err_val} and \eqref{eq:theta_err_val}.

\addtocounter{footnote}{-1} 
\footnotetext{The animation gif is available at \url{https://bit.ly/3ErwE7e}.}
\addtocounter{footnote}{1} 
\footnotetext{The animation gif is available at \url{https://bit.ly/44DSjUB}.}

\section*{Acknowledgments}
The authors thank the Automated Driving Lab at the Ohio State University.

\bibliographystyle{IEEEtran}
\bibliography{references}

 





\end{document}